\def \TRkeywords{Multi-task Learning, Kernel Methods, Generalization Bound, Support Vector Machines}
\begin{document}

\maketitle

\ifMakeReviewDraft
	\linenumbers
\fi


\begin{abstract}
When faced with learning a set of inter-related tasks from a limited amount of usable data, learning each task independently may lead to poor generalization performance. \ac{MTL} exploits the latent relations between tasks and overcomes data scarcity limitations by co-learning all these tasks simultaneously to offer improved performance. We propose a novel \acl{MT-MKL} framework based on \aclp{SVM} for binary classification tasks. By considering pair-wise task affinity in terms of similarity between a pair's respective feature spaces, the new framework, compared to other similar \ac{MTL} approaches, offers a high degree of flexibility in determining how similar feature spaces should be, as well as which pairs of tasks should share a common feature space in order to benefit overall performance. The associated optimization problem is solved via a block coordinate descent, which employs a consensus-form \acl{ADMM} algorithm to optimize the \acl{MKL} weights and, hence, to determine task affinities. Empirical evaluation on seven data sets exhibits a statistically significant improvement of our framework's results compared to the ones of several other \acl{CMTL} methods.  
\end{abstract}


\vskip 0.5in
\noindent
{\bf Keywords:} \TRkeywords

\acresetall

\section{Introduction}
\label{sec:Introduction}
\ac{MTL} is a machine learning paradigm, where several related task are learnt simultaneously with the hope that, by sharing information among tasks, the generalization performance of each task will be improved. The underlying assumption behind this paradigm is that the tasks are related to each other. Thus, it is crucial how to capture task relatedness and incorporate it into an \ac{MTL} framework. Although, many different \ac{MTL} methods  \cite{caruana1997multitask,evgeniou2007multi,jalali2010dirty,gu2011joint,zhang2012convex,argyriou2013learning} have been proposed, which differ in how the relatedness across multiple tasks is modeled, they all utilize the parameter or structure sharing strategy to capture the task relatedness.

However, the previous methods are restricted in the sense that they assume all tasks are similarly related to each other and can equally contribute to the joint learning process. This assumption can be violated in many practical applications as ``outlier'' tasks often exist. In this case, the effect of ``negative transfer", \ie, sharing information between irrelevant tasks, can lead to a degraded generalization performance. 

To address this issue, several methods, along different directions, have been proposed to discover the inherent relationship among tasks. For example, some methods \cite {bakker2003task,xue2007multi,zhang2012convex,zhang2014regularization}, use a regularized probabilistic setting, where sharing among tasks is done based on a common prior. These approaches are usually computationally expensive. Another family of approaches, known as the \ac{CMTL}, assumes that tasks can be clustered into groups such that the tasks within each group are close to each other according to a notion of similarity. Based on the current literature, clustering strategies can be broadly classified into two categories: task-level \ac{CMTL} and feature-level \ac{CMTL}.

The first one, task-level \ac{CMTL}, assumes that the model parameters used by all tasks within a group are close to each other. For example, in \cite{argyriou2008convex,evgeniou2005learning,jacob2009clustered}, the weight vectors of the tasks belonging to the same group are assumed to be similar to each other. However, the major limitations for these methods are: (i) that such an assumption might be too risky, as similarity among models does not imply that meaningful sharing of information can occur between tasks, and (ii) for these methods, the group structure (number of groups or basis tasks) is required to be known a priori.  

The other strategy for task clustering, referred to as feature-level \ac{CMTL}, is based on the assumption that task relatedness can be modeled as learning shared features among the tasks within each group. For example, in \cite {kang2011learning} the tasks are clustered into different groups and it is assumed that tasks within the same group can jointly learn a shared feature representation. The resulting formulation leads to a non-convex objective, which is optimized using an alternating optimization algorithm converging to local optima, and suffers potentially from slow convergence. Another similar approach has been proposed in \cite{rohtua2015}, which assumes that tasks should be related in terms of feature subsets. This study also leads to a non-convex co-clustering structure that captures task-feature relationship. These methods are restricted in the sense that they assume that tasks from different groups have nothing in common with each other. However, this assumption is not always realistic, as tasks in disjoint groups might still be inter-related, albeit weekly. Hence, assigning tasks into different groups may not take full advantage of \ac{MTL}. Another feature-level clustering model has been proposed in \cite{zhong2012convex}, in which the cluster structure can vary from feature to feature. While, this model is more flexible compared to other \ac{CMTL} methods, it is, however, more complicated and also less general compared to our framework, as it tries to find a shared feature representation for tasks by decomposing each task parameter into two parts: one to capture the shared structure between tasks and another to capture the variations specific to each task. This model is further extended in \cite {Han2015}, where a multi-level structure has been introduced to learn task groups in the context of \ac{MTL}. Interestingly, it has been shown that there is an equivalent relationship between \ac{CMTL} and alternating structure optimization  \cite{zhou2011clustered}, wherein the basic idea is to identify a shared low-dimensional predictive structure for all tasks.

In this paper, we develop a new \ac{MTL} model capable of modeling a more general type of task relationship, where the tasks are implicitly grouped according to a notion of feature similarity. In our framework, the tasks are not forced to have a common feature space; instead, the data automatically suggests a flexible group structure, in which a common, similar or even distinct feature spaces can be determined between different pairs of tasks. Additionally, our \ac{MTL} framework is kernel-based and, thus, may take advantage of the non-linearity introduced by the feature mapping of the associated \ac{RKHS} $\mathcal{H}$. Also, to avoid a degradation in generalization performance due to choosing an inappropriate kernel function, our framework employs a \ac{MKL} strategy \cite{lanckriet2004learning}, hence, rendering it a \ac{MT-MKL} approach.

It is worth mentioning that a widely adopted practice for combining kernels is to place an $L_p$-norm constraint on the combination coefficients $\boldsymbol{\theta}= [\theta_{1},\ldots,\theta_{M}]$, which are learned during training. For example, a conically combination of task objectives with an $L_p$-norm feasible region is introduced in \cite{li2014pareto} and further extended in \cite{li2014conic}. Also, another method introduced in \cite{tang2009multiple} proposes a partially shared kernel function $k_t \triangleq \sum_{m=1}^{M} (\mu^m + \lambda_t^m) k_m$, along with $L_1$-norm constraints on $\boldsymbol{\mu}$ and $\boldsymbol{\lambda}$. The main advantage of such a method over the traditional \ac{MT-MKL} methods, which consider a common kernel function for all tasks (by letting $\lambda_t^m=0, \forall t,m$), is that it allows tasks to have their own task-specific feature spaces and, potentially, alleviate the effect of negative transfer. However, popular \ac{MKL} formulations in the context of \ac{MTL}, such as this one, are capable of modeling two types of tasks: those that share a global, common feature space and those that employ their own, task-specific feature space. In this work we propose a more flexible framework, which, in addition to allowing some tasks to use their own specific feature spaces (to avoid negative transfer learning), it permits forming arbitrary groups of tasks sharing the same, group-specific (instead of a single, global), common feature space, whenever warranted by the data. This is accomplished by considering a group lasso regularizer applied to the set of all pair-wise differences of task-specific \ac{MKL} weights. For no regularization penalty, each task is learned independently of each other and will utilize its own feature space. As the regularization penalty increases, pairs of \ac{MKL} weights are forced to equal each other leading the corresponding pairs of tasks to share a common feature space. We demonstrate that the resulting optimization problem can be solved by employing a 2-block coordinate descent approach, whose first block consists of the \ac{SVM} weights for each task and which can be optimized efficiently using existing solvers, while its second block comprises the \ac{MKL} weights from all tasks and is optimized via a consensus-form, \ac{ADMM}-based step. 
 
The rest of the paper is organized as follows: In \sref{sec:Learning Multiple Tasks with a Group Lasso Penalty} we describe our formulation for jointly learning the optimal feature spaces and the parameters of all the tasks. \sref{sec:Optimization Algorithm} provides an optimization technique to solve our non-smooth convex optimization problem derived in \sref {sec:Learning Multiple Tasks with a Group Lasso Penalty}. \sref {sec:GeneralizationBounds} presents a Rademacher complexity-based generalization bound for the hypothesis space corresponding to our model.  Experiments are provided in \sref {sec:Experiments}, which demonstrate the effectiveness of our proposed model compared to several \ac{MTL} methods. Finally, in \sref{sec:Conclusions} we conclude our work and briefly summarize our findings.
 
 \textbf{Notation}: In what follows, we use the following notational conventions: vectors and matrices are depicted in bold face. A prime $'$ denotes vector/matrix transposition. The ordering symbols $\succeq$ and $\preceq$ when applied to vectors stand for the corresponding component-wise relations. If $\mathbb{Z}_+$ is the set of postivie integers, for a given $S \in \mathbb{Z}_+$, we define $\mathbb{N}_S \triangleq \left\{  1, \ldots, S \right\} $. Additional notation is defined in the text as needed.


\section {Formulation}
\label{sec:Learning Multiple Tasks with a Group Lasso Penalty}
Assume $T$ supervised learning tasks, each with a training set $\left\{ \left( x_t^n, y_t^n \right) \right\}_{n=1}^{n_t}, t \in \mathbb{N}_T$, which is sampled from an unknown distribution $P_t(x, y)$ on $\mathcal{X} \times \left\{ -1, 1\right\}$. Here, $\mathcal{X}$ denotes the native space of samples for all tasks and $\pm 1$ are the associated labels. Without loss of generality, we will assume an equal number $n$ of training samples per task. The objective is to learn $T$ binary classification tasks using discriminative functions $f_t(\boldsymbol{x}) \triangleq \left\langle  \boldsymbol{w}_t, \boldsymbol{\phi}_t(\boldsymbol{x}) \right\rangle_{\mathcal{H}_{t,\boldsymbol{\theta}}}+ b_{t}$ for $t \in \mathbb{N}_T$, where $\boldsymbol{w}_t$ is the weight vector associated to task $t$.  Moreover, the feature space of task $t$ is served by $\mathcal{H}_{t,\boldsymbol{\theta}} = \bigoplus_{m=1}^M \sqrt{\theta_t^m} \mathcal{H}_m$ with induced feature mapping $\phi_t \triangleq [ \sqrt{\theta_t^1}{\phi_1}' \cdots $ $ \sqrt{\theta_t^M} {\phi_M}']'$ and endowed with the inner product $\left\langle \cdot, \cdot \right\rangle_{\mathcal{H}_{t,\boldsymbol{\theta}}} = \sum_{m=1}^M \theta_t^m \left\langle \cdot, \cdot \right\rangle_{\mathcal{H}_m}$. The reproducing kernel function for this feature space is given as $k_t(x_t^i,x_t^j) = \sum_{m=1}^M \theta_t^m k_m(x_t^i,x_t^j)$ for all $x_t^i, x_t^j \in \mathcal{X}$. In our framework, we attempt to learn the $\boldsymbol{w}_t$'s and $b_t$'s jointly with the $\boldsymbol{\theta}_t$'s via the following regularized risk minimization problem: 

\noindent
\begin{align}
	\label{pro:primalFormulationEquivalent}
	\min_{\boldsymbol{w} \in \varOmega\left( \boldsymbol{w}\right), \boldsymbol{\theta}\in \varOmega\left( \boldsymbol{\theta}\right), \boldsymbol{b}} \  \sum_{t=1}^{T} & \frac{\| \boldsymbol{w}_t \|^2} {2}  + C \sum_{t=1}^{T}\sum_{i=1}^{n} \left[1-\boldsymbol{y}_t^i f_{t}(x_{t}^{i})\right]_{+} + \lambda \sum_{t=1}^{T-1}\sum_{s>t}^{T} \left\| \boldsymbol{\theta}_t - \boldsymbol{\theta}_s \right\|_2 
	\nonumber\\
	\varOmega\left ( \boldsymbol{w}\right)  \triangleq &
	\lbrace \boldsymbol{w} =  \left(\boldsymbol{w}_1,\cdots,\boldsymbol{w}_T\right) : \boldsymbol{w}_t \in \mathcal{H}_{t,\boldsymbol{\theta}}, \boldsymbol{\theta} \in \varOmega\left( \boldsymbol{\theta}\right) \rbrace \
	\nonumber\\
	\varOmega\left ( \boldsymbol{\theta}\right) \triangleq &
	\lbrace \boldsymbol{\theta} =  \left( \boldsymbol{\theta}_t,\cdots,\boldsymbol{\theta}_T \right) : \boldsymbol{\theta}_t \succeq \mathbf{0},  \| \boldsymbol{\theta}_t \|_1 \leq 1, \forall t \in \mathbb{N}_T \rbrace 
\end{align}

\noindent
where $\boldsymbol{w}\triangleq \left( \boldsymbol{w}_t,\cdots,\boldsymbol{w}_T\right)$ and $\boldsymbol{\theta}\triangleq \left( \boldsymbol{\theta}_t,\cdots,\boldsymbol{\theta}_T\right)$, $\varOmega\left( \boldsymbol{w}\right)$ and $\varOmega\left( \boldsymbol{\theta}\right)$ are the corresponding feasible sets for $\boldsymbol{w}$ and $\boldsymbol{\theta}$ respectively, and $[u]_+=\max \left\{ u, 0 \right\}, \ u \in \mathbb{R}$ denotes
the hinge function. Finally, $C$ and $\lambda$ are non-negative regularization parameters.

The last term in \probref{pro:primalFormulationEquivalent} is the sum of pairwise differences between the tasks' feature weight vectors. For each pair of $\left( \boldsymbol{\theta}_t, \boldsymbol{\theta}_s\right) $, the pairwise penalty $\|\boldsymbol{\theta}_t-\boldsymbol{\theta}_s\|_2$ may favor a small number of non-identical $\boldsymbol{\theta}_t$. Therefore, it ensures that a flexible  (common, similar or distinct) feature space, will be selected between tasks $t$ and $s$. In this manner, a flexible group structure of shared features across multiple tasks can be achieved by this framework. It is also worth mentioning that two special cases are covered by the proposed model: (i) if $\lambda \rightarrow \infty $ ($\lambda$ is only required to be sufficiently large), for all task pairs $\left\| \boldsymbol{\theta}_t - \boldsymbol{\theta}_s \right\|_2 \rightarrow 0$ and, thus, all tasks share a single common feature space. (ii) As $\lambda \rightarrow 0$, the proposed model reduces to $T$ independent classification tasks.

 It is easy to verify that \probref{pro:primalFormulationEquivalent} is a convex minimization problem, which can be solved using a block coordinate descent method alternating between the minimization with respect to $\boldsymbol{\theta}$ and the $(\boldsymbol{w}, \boldsymbol{b})$ pair. Motivated by the non-smooth nature of the last regularization term, in \sref{sec:Optimization Algorithm} we develop a consensus version of the \ac{ADMM} to solve the minimization problem with respect to $\boldsymbol{\theta}$.
 
%

\section{The proposed Consensus Optimization Algorithm}
\label{sec:Optimization Algorithm}


\probref{pro:primalFormulationEquivalent} can be formulated as the following equivalent problem, which entails $T$ inter-related \ac{SVM} training problems:

\noindent
\begin{align}
	\label{pro:primalFormulation-W}
	\min_{\boldsymbol{\theta}, \boldsymbol{w}, \boldsymbol{b}, \boldsymbol{\xi} }  \sum_{t=1}^{T} & \sum_{m=1}^M  \frac{\| \boldsymbol{w}_t^m \|^2_{\mathcal{H}_m}}{2 \theta_t^m}  +  C \sum_{t=1}^{T}\sum_{i=1}^{n} \xi_t^i + \lambda \sum_{t=1}^{T-1}\sum_{s>t}^{T} \left\| \boldsymbol{\theta}_t - \boldsymbol{\theta}_s \right\|_2 
	\nonumber\\
	\textit{s.t.}\; & y_t^i  \left( \left\langle w_t, \phi(x_t^i)\right\rangle_{\mathcal{H}_t} + b_t \right)  \geq 1-\xi_t^i ,\; \xi_t^i\geq 0,\; \forall\; t \in \mathbb{N}_T, i \in \mathbb{N}_n
	\nonumber\\
	& \boldsymbol{\theta}_t \succeq \mathbf{0},  \| \boldsymbol{\theta}_t \|_1 \leq 1, \forall\; t \in \mathbb{N}_T
\end{align}

It can be shown that the primal-dual form of \probref{pro:primalFormulation-W} with respect to $\boldsymbol{\theta}$ and $\{ \boldsymbol{w}, \boldsymbol{b}, \boldsymbol{\xi} \}$  is given by 
\begin{align}
	\label{pro:PrimalDualFormulation}
	 \min_{{\boldsymbol{\theta}}_t \in \varOmega (\boldsymbol{\theta})} \max_{\boldsymbol{\alpha}_t \in \varOmega (\boldsymbol{\alpha})}  & \sum_{t=1}^{T} \boldsymbol{\alpha}^{'}_t \mathbf{1}_n  -  \dfrac{1}{2}  \sum_{t=1}^{T} \sum_{m=1}^M \theta_t^m  (\boldsymbol{\alpha}^{'}_t Y_t  K_t^m  Y_t \boldsymbol{\alpha}_t)+ \lambda \sum_{t=1}^{T-1}\sum_{s>t}^{T} \left\| \boldsymbol{\theta}_t - \boldsymbol{\theta}_s \right\|_2 
	\nonumber\\
	 \varOmega \left( \boldsymbol{\alpha} \right) \triangleq &
		\lbrace \boldsymbol{\alpha} =  \left( \boldsymbol{\alpha}_t,\cdots,\boldsymbol{\alpha}_T \right) :  0 \preceq \boldsymbol{\alpha}_t \preceq C \mathbf{1}_n,\; \boldsymbol{\alpha}_t^{'} \boldsymbol{y}_t= 0 ,\; \forall\; t \in \mathbb{N}_T \rbrace 
	\nonumber\\
	 \varOmega\left ( \boldsymbol{\theta}\right) \triangleq &
		\lbrace \boldsymbol{\theta} =  \left( \boldsymbol{\theta}_t,\cdots,\boldsymbol{\theta}_T \right) : \boldsymbol{\theta}_t \succeq \mathbf{0},  \| \boldsymbol{\theta}_t \|_1 \leq 1, \forall\; t \in \mathbb{N}_T \rbrace 
\end{align}
\noindent
where $\mathbf{1}_n$ is a vector containing $n$ $1$'s, $\boldsymbol{Y}_t \triangleq diag(\boldsymbol{y}_t)$, $\boldsymbol{K}_t^m \in \mathbb{R}^{n\times n}$ is the kernel matrix, whose $(i,j)$ entry is given as $k_m (x_t^i,x_t^j)$, $\boldsymbol{\theta}_t \triangleq [\theta_t^1,\ldots,\theta_t^M]'$, and $\boldsymbol{\alpha}_t$ is the Lagrangian dual variable for the minimization problem \wrt $\{ \boldsymbol{w}_t, b_t, \boldsymbol{\xi}_t \}$. 

 It is not hard to verify that the optimal objective value of the dual problem is equal to the optimal objective value of the primal one, as the strong duality holds for the primal-dual optimization problems
 \wrt $\{ \boldsymbol{w}, \boldsymbol{b}, \boldsymbol{\xi} \}$ and $\boldsymbol{\alpha}$ respectively. Therefore, a block coordinate descent framework\footnote{A \texttt{MATLAB\textsuperscript{\circledR}} implementation of our framework is available at\\ \href{https://github.com/niloofaryousefi/ECML2015}{https://github.com/niloofaryousefi/ECML2015}} can be applied to decompose \probref{pro:PrimalDualFormulation} into two subproblems. The first subproblem, which is the maximization problem with respect to $\boldsymbol{\alpha}$, can be efficiently  solved via \texttt{LIBSVM} \cite{LIBSVM:CC01a}, and the second subproblem, which is the minimization problem with respect to $\boldsymbol{\theta}$, takes the form
\begin{align}
	\label{pro:PrimalFormulation-theta}
	  & \min_{{\boldsymbol{\theta}}_t}  \lambda \sum_{t=1}^{T-1}\sum_{s>t}^{T} \left\| \boldsymbol{\theta}_t - \boldsymbol{\theta}_s \right\|_2 + \sum_{t=1}^{T}  \boldsymbol{\theta}_t^{'} \boldsymbol{q}_t 
	  \nonumber\\
	&  \textit{s.t.}\;  \boldsymbol{\theta}_t \succeq \mathbf{0},  \| \boldsymbol{\theta}_t \|_1 \leq 1,\; \forall\; t \in \mathbb{N}_T
\end{align}
\noindent
where we defined $q_t^m \triangleq - \frac{1}{2}  \boldsymbol{\alpha}^{'}_t Y_t  K_t^m  Y_t \boldsymbol{\alpha}_t$ and $\boldsymbol{q}_t \triangleq  [q_t^1,\ldots,q_t^M]'$. Due to the non-smooth nature of \probref{pro:PrimalFormulation-theta}, we derive a consensus \ac{ADMM}-based optimization algorithm to solve it efficiently. Based on the exposition provided in Sections 5 and 7 of \cite{Boyd:2011}, it is straightforward to verify that \probref {pro:PrimalFormulation-theta} can be written in \ac{ADMM} form as 
\begin{align}
 \label{prob:consensus-theta}
 \min_{\boldsymbol{s}, \boldsymbol{\theta}, \boldsymbol{z}}\; & \lambda \sum _{i=1}^N  h_i(\boldsymbol{s}_i)+ g(\boldsymbol{\theta})+ I_{\varOmega\left ( \boldsymbol{\theta}\right)}(\boldsymbol{z})
    \nonumber\\
    \textit{s.t.}\ & \boldsymbol{s}_i - \tilde{\boldsymbol{\theta}_i}= \mathbf{0},\; i \in \mathbb{N}_N 
    \nonumber\\
    & \boldsymbol{z} - \boldsymbol{\theta} = \mathbf{0}
 \end{align} 
\noindent
where $N \triangleq \frac{T(T-1)}{2} $, and the local variable $\boldsymbol{s}_i \in \mathbb{R}^{2M}$ consists of two vector variables $(\boldsymbol{s}_i)_{j}$ and $(\boldsymbol{s}_i)_{j'}$, where $(\boldsymbol{s}_i)_{j}=\boldsymbol {\theta}_{\mathcal{M}_(i,j)}$. Note that the index mapping $t=\mathcal{M}(i,j)$ maps the $j$\textsuperscript{th} component of the local variable $\boldsymbol{s}_i$ to the $t$\textsuperscript{th} component of the global variable $\boldsymbol{\theta}$. Also, $\tilde{\boldsymbol{\theta}}_i$ can be considered as the global variable's idea of what the local variable $\boldsymbol{s}_i$ should be. Moreover, for each $i$, the function $h_i(\boldsymbol{s}_i)$ is defined as $\left\| (\boldsymbol{s}_i)_{j} - (\boldsymbol{s}_i)_{j'} \right\|_2 $, and the objective term $g(\boldsymbol{\theta})$ is given as $ \sum_{t=1}^{T}  \boldsymbol{\theta}_t^{'} \boldsymbol{q}_t $. Finally, $I_{\varOmega\left ( \boldsymbol{\theta}\right)} (\boldsymbol{z})$ is the indicator function for the constraint set $\boldsymbol{\theta}$ (\ie, $I_{\varOmega\left ( \boldsymbol{\theta}\right)} (\boldsymbol{z})=0$ for $\boldsymbol{z} \in \varOmega\left ( \boldsymbol{\theta}\right)$, and $I_{\varOmega\left( \boldsymbol{\theta} \right)} (\boldsymbol{z})= \infty$ for $\boldsymbol{z} \notin \varOmega\left ( \boldsymbol{\theta}\right)$). 

The augmented Lagrangian (using scaled dual variables) for \probref {prob:consensus-theta} is
\begin{align}
   \label{eq:ConsensusAugmented-theta}
    L_\rho(\boldsymbol{s},\boldsymbol{\theta},\boldsymbol{z},\boldsymbol{u},\boldsymbol{v} )  = & \lambda \sum_{i=1}^N  h_i(\boldsymbol{s}_i)+g(\boldsymbol{\theta})+ I_{\varOmega\left ( \boldsymbol{\theta}\right)}(\boldsymbol{z})+(\rho/2)\sum_{i=1}^N \lVert \boldsymbol{s}_i-\tilde{\boldsymbol{\theta}}_i + \boldsymbol{u}_i \rVert ^2_2 
    \nonumber\\ 
    &+(\rho/2) \lVert \boldsymbol{z}-\boldsymbol{\theta} + \boldsymbol{v} \rVert ^2_2,
\end{align}
\noindent
where $\boldsymbol{u}_i$ and $\boldsymbol{v}$ are the dual variables for the constraints $\boldsymbol{s}_i = \tilde{\boldsymbol{\theta}}_i$ and $\boldsymbol{z} = \boldsymbol{\theta} $ respectively. Applying \ac {ADMM} on the Lagrangian function given in \eqref{eq:ConsensusAugmented-theta}, the following steps are carried out in the $k$\textsuperscript{th} iteration 
\begin{align}
   \label{eq:ConsensusADMMSteps-theta}
    &  \boldsymbol{s}_i^{k+1}=\arg\min_{\boldsymbol{s}_i} \{ \lambda h_i(\boldsymbol{s}_i)+(\rho/2) \lVert \boldsymbol{s}_i-\tilde{\boldsymbol{\theta}}_i^{k} + \boldsymbol{u}_i^{k} \rVert ^2_2 \}\\
    \label{eq:ConsensusADMMSteps-theta-2}
    & \boldsymbol{\theta}^{k+1}=\arg\min_{\boldsymbol{\theta}} \{ g(\boldsymbol{\theta})+(\rho/2)\sum_{i=1}^N \lVert \boldsymbol{s}_i^{k+1}-\tilde{\boldsymbol{\theta}}_i + \boldsymbol{u}_i^{k} \rVert ^2_2 + (\rho/2) \lVert \boldsymbol{z}^{k}-\boldsymbol{\theta} + \boldsymbol{v}^{k} \rVert ^2_2 \} \\
    \label{eq:ConsensusADMMSteps-theta-3}
    & \boldsymbol{z}^{k+1}=\arg\min_{\boldsymbol{z}} \{ I_{\varOmega\left ( \boldsymbol{\theta}\right)}(\boldsymbol{z})+(\rho/2) \lVert \boldsymbol{z}-\boldsymbol{\theta}^{k+1} + \boldsymbol{v}^{k} \rVert ^2_2 \} \\
    \label{prob:ConsensusADMMSteps-theta-4}
    & \boldsymbol{u}_i^{k+1}=\boldsymbol{u}_i^{k}+ \boldsymbol{s}_i^{k+1}-\tilde{\boldsymbol{\theta}}_i^{k+1}\\
    \label{prob:ConsensusADMMSteps-theta-5}
    &
    \boldsymbol{v}^{k+1}=\boldsymbol{v}^{k}+ \boldsymbol{z}^{k+1}-\boldsymbol{\theta}^{k+1}
\end{align}
\noindent
where, for each $i \in \mathbb{N}_N$, the $\boldsymbol{s}$- and $\boldsymbol{u}$-updates can be carried out independently and in parallel. It is also worth mentioning that the $\boldsymbol{s}$-update is a proximal operator evaluation for $\lVert.\rVert_2$ which can be simplified to
\begin{align}
   \label{prob:ConsensusADMMSteps-theta-xupdate}
    &  \boldsymbol{s}_i^{k+1} = \mathcal{S}_{\lambda/\rho} (\tilde{\boldsymbol{\theta}}_i^{k} + \boldsymbol{u}_i^{k}),\; \forall\; i \in \mathbb{N}_N
\end{align}
\noindent
where $\mathcal{S}_{\kappa}$ is the vector-valued soft thresholding (or shrinkage) operator and which is defined as 
\begin{align}
   \label{prob:theta-xupdate-softthreshholding}
    &  \mathcal{S}_{\kappa} (\boldsymbol{a}) \triangleq (1-\kappa / \lVert \boldsymbol {a} \rVert_2 )_+ \boldsymbol{a}, \quad \mathcal{S}_{\kappa} (0) \triangleq 0.
\end{align}
\noindent
Furthermore, as the objective term $g$ is separable in $\boldsymbol{\theta}_t$, the $\boldsymbol{\theta}$-update can be decomposed into $T$ independent minimization problems, for which a closed from solution exists
\begin{align}
   \label{prob:ConsensusADMMSteps-theta-thetaupdate}
    &  \boldsymbol{\boldsymbol{\theta}}_t^{k+1} = \dfrac{1}{T-1} \left[ \sum_{\mathcal{M}(i,j)=t} \left( (\boldsymbol{s}_i)_j^{k+1} + (\boldsymbol{u}_i)_j^{k} \right) + \left( \boldsymbol{z}_t^{k} + \boldsymbol{v}_t^{k} \right) - (1/\rho) \boldsymbol{q}_t \right] ,\; \forall\; t \in \mathbb{N}_T
\end{align}

\begin{algorithm}
   \caption{Algorithm for solving \probref{pro:PrimalDualFormulation}.} 
   \label{alg1}
\begin{algorithmic}[1]
\REQUIRE
  $\boldsymbol{X}_{1},\ldots,\boldsymbol{X}_{T}, \boldsymbol{Y}_{1},\ldots,\boldsymbol{Y}_{T}, C, \lambda$ \\
 \ENSURE
  $ \boldsymbol{\theta}_{1},\ldots,\boldsymbol{\theta}_{T},\boldsymbol{\alpha}_{1},\ldots,\boldsymbol{\alpha}_{T}$\\
 \STATE
 \textbf{Initialize:} $ \boldsymbol{\theta}_{1}^{(0)},\ldots,\boldsymbol{\theta}_{T}^{(0)}$, $r=1$\\
 \STATE
 \textbf{Calculate:} Base kernel matrices $K_t^m$ using $\boldsymbol{X}_{t}$'s for the $T$ tasks and the $M$ kernels.
\WHILE{not converged}
\STATE $\boldsymbol{\alpha}^{(r)} \leftarrow \arg \max_{\ \boldsymbol{\alpha} \in \varOmega (\boldsymbol{\alpha})}\  \sum_{t=1}^{T} \boldsymbol{\alpha}^{'}_t \boldsymbol{e}  -  \dfrac{1}{2}  \sum_{t=1}^{T} \sum_{m=1}^M (\theta_t^{m})^{(r-1)}  (\boldsymbol{\alpha}^{'}_t Y_t  K_t^m  Y_t \boldsymbol{\alpha}_t)$\\
\STATE $(q_t^m)^{(r)} \leftarrow - \frac{1}{2}  (\boldsymbol{\alpha}^{'}_t)^{(r)} Y_t  K_t^m  Y_t  (\boldsymbol{\alpha}_t)^{(r)}, \; \forall t,m$
\STATE $\boldsymbol{\theta}^{(r)} \leftarrow \arg\min_{\ \boldsymbol{\theta} \in \varOmega (\boldsymbol{\theta}) }\  \lambda \sum_{t=1}^{T-1}\sum_{s>t}^{T} \left\| \boldsymbol{\theta}_t - \boldsymbol{\theta}_s \right\|_2 + \sum_{t=1}^{T}  \boldsymbol{\theta}_t^{'} \boldsymbol{q}_t^{(r)} $ using Algorithm \ref{alg2}
\ENDWHILE
\STATE $\boldsymbol{\alpha}^{*} = \boldsymbol{\alpha}^{(r)}$
\STATE $\boldsymbol{\theta}^{*} = \boldsymbol{\theta}^{(r)}$
\end{algorithmic}
\end{algorithm} 
\setlength\floatsep{0.02\baselineskip plus 2pt minus 1pt}
In the third step of the \ac {ADMM}, we project $(\boldsymbol{\theta}^{k+1} - \boldsymbol{v}^{k})$ onto the constraint set $\varOmega\left ( \boldsymbol{\theta}\right)$. Note that, this set is 
separable in $\boldsymbol{\theta}$, so the projection step can also be performed independently and in parallel for each variable $\boldsymbol{z}_t$, \ie,
\begin{align}
   \label{prob:ConsensusADMMSteps-theta-zupdate}
    &  \boldsymbol{z}_t^{k+1} = \mathbf{\Pi}_{\varOmega\left ( \boldsymbol{\theta}\right)} (\boldsymbol{\theta}_t^{k+1} + \boldsymbol{v}_t^{k}), \; \forall\; t \in \mathbb{N}_T. 
\end{align}

The $\boldsymbol{z}_t$-update can also be seen as the problem of finding the intersection between two closed convex sets $ \varOmega_1 \left ( \boldsymbol{\theta}\right)= \{\boldsymbol{\theta}_t \succeq \mathbf{0},\; \forall\; t \in \mathbb{N}_T \}$ and $\varOmega_2 \left ( \boldsymbol{\theta}\right) =\{ \| \boldsymbol{\theta}_t \|_1 \leq 1,\; \forall\; t \in \mathbb{N}_T \}$, which can be handled using Dykstra's alternating projections method \cite{bauschke1994,dykstra1983} as follows
\begin{align}
   \label{eq:ConsensusADMMSteps-theta-zupdate-2}
    &  \boldsymbol{y}_t^{k+1} = \mathbf{\Pi}_{\varOmega_1 \left ( \boldsymbol{\theta}\right)} (\boldsymbol{\theta}_t^{k+1} + \boldsymbol{v}_t^{k} - \boldsymbol{\beta}_t^{k})= \frac{1}{2} \left[  \boldsymbol{\theta}_t^{k+1} + \boldsymbol{v}_t^{k} - \boldsymbol{\beta}_t^{k}  \right]_+,\; \forall\; t \in \mathbb{N}_T 
    \\
    \label{eq:ConsensusADMMSteps-theta-zupdate-2-2}
    &  \boldsymbol{z}_t^{k+1} = \mathbf{\Pi}_{\varOmega_2 \left ( \boldsymbol{\theta}\right)} (\boldsymbol{y}_t^{k+1} +  \boldsymbol{\beta}_t^{k})= \mathbf{P}_M (\boldsymbol{y}_t^{k+1} +  \boldsymbol{\beta}_t^{k}) + \frac{1}{M} \mathbf{1}_M,\; \forall\; t \in \mathbb{N}_T 
    \\
    \label{eq:ConsensusADMMSteps-theta-zupdate-2-3}
    & \boldsymbol{\beta}_t^{k+1}= \boldsymbol{\beta}_t^{k}+  \boldsymbol{y}_t^{k+1} - \boldsymbol{z}_t^{k+1},\; \forall\; t \in \mathbb{N}_T
\end{align}
\noindent
where $\mathbf{P}_M \triangleq  \left( \mathbf{I}_M-\frac{\mathbf{1}_M \mathbf{1}_M^{'}}{M} \right)$ is the centering matrix. Furthermore, the $\boldsymbol{y}_t$- and $\boldsymbol{z}_t$ updates are the Euclidean projections onto $\varOmega_1 \left ( \boldsymbol{\theta}\right)$ and $\varOmega_2 \left ( \boldsymbol{\theta}\right)$ respectively with dual variables $\boldsymbol{\beta}_t \in \mathbb{R}^{M\times 1},\;t=1,\ldots,T$. Finally, we update the dual variables $\boldsymbol{u}_i$ and $\boldsymbol{v}$ using the equations given in \eqref{prob:ConsensusADMMSteps-theta-4} and \eqref{prob:ConsensusADMMSteps-theta-5}.

\setlength\textfloatsep{1\baselineskip plus 3pt minus 2pt}
\begin{algorithm}
   \caption{Consensus \ac{ADMM} algorithm to solve optimization \probref{pro:PrimalFormulation-theta}}
   \label{alg2}
\begin{algorithmic}[1]
\REQUIRE
 $\boldsymbol{q}_{1}^{(r)},\ldots,\boldsymbol{q}_{T}^{(r)},  \rho$ \\
 \ENSURE
 $ \boldsymbol{\theta}_{1}^{(r)},\ldots,\boldsymbol{\theta}_{T}^{(r)}$\\
 \STATE
 \textbf{Initialize:}  $\hat{\boldsymbol{\theta}}_{1}^{(0)},\ldots,\hat{\boldsymbol{\theta}}_{T}^{(0)}, k=0$\\
\WHILE{not converged}
\FOR{$ i \in \mathbb{N}_N, t \in \mathbb{N}_T$}
\STATE $\boldsymbol{s}_i^{k+1} \leftarrow  \mathcal{S}_{\lambda/\rho} (\tilde{\boldsymbol{\theta}}_i^{k} + \boldsymbol{u}_i^{k})$\\
\STATE $\hat{\boldsymbol{\theta}}_{t}^{k+1} \leftarrow  \dfrac{1}{T-1} \left[ \sum_{\mathcal{M}(i,j)=t} \left( (\boldsymbol{s}_i)_j^{k+1} + (\boldsymbol{u}_i)_j^{k} \right) + \left( \boldsymbol{z}_t^{k} + \boldsymbol{v}_t^{k} \right) - (1/\rho) \boldsymbol{q}_t \right]$ \\
\STATE $\boldsymbol{y}_t^{k+1} \leftarrow  \frac{1}{2} \left[  \hat{\boldsymbol{\theta}}_t^{k+1} + \boldsymbol{v}_t^{k} - \boldsymbol{\beta}_t^{k}  \right]_+$\\
\STATE  $\boldsymbol{z}_t^{k+1} \leftarrow \mathbf{P}_M (\boldsymbol{y}_t^{k+1} +  \boldsymbol{\beta}_t^{k}) + \frac{1}{M}\mathbf{1}_M$\\
\STATE $\boldsymbol{\beta}_t^{k+1}  \leftarrow  \boldsymbol{\beta}_t^{k}+  \boldsymbol{y}_t^{k+1} - \boldsymbol{z}_t^{k+1}$ \\
\STATE $\boldsymbol{u}_i^{k+1}  \leftarrow \boldsymbol{u}_i^{k}+ \boldsymbol{s}_i^{k+1}-\tilde{\boldsymbol{\theta}}_i^{k+1}$ \\
\STATE $\boldsymbol{v}_{t}^{k+1} \leftarrow \boldsymbol{v}_{t}^{k}+ \boldsymbol{z}_{t}^{k+1}- \hat{\boldsymbol{\theta}}_{t}^{k+1}$
\ENDFOR   
\ENDWHILE
 \STATE $\boldsymbol{\theta}^{(r)} \leftarrow \hat{\boldsymbol{\theta}} ^{(k+1)}$      
\end{algorithmic}
\end{algorithm}
\noindent

\subsection{Convergence Analysis and Stopping Criteria}

 Convergence of \aref{alg2} can be derived based on two mild assumptions similar to the standard convergence theory of the \ac {ADMM} method discussed in \cite{Boyd:2011}; (i) the objective functions $h(\boldsymbol{s})=\sum_{i=1}^{N} \left\| (\boldsymbol{s}_i)_{j} - (\boldsymbol{s}_i)_{j'} \right\|_2$ and $g(\boldsymbol{\theta})=\sum_{t=1}^{T}  \boldsymbol{\theta}_t^{'} \boldsymbol{q}_t$ are closed, proper and convex, which implies that the subproblems arising in the $\boldsymbol{s}$-update \eqref{eq:ConsensusADMMSteps-theta} and $\boldsymbol{\theta}$-update \eqref{eq:ConsensusADMMSteps-theta-2} are solvable, and (ii) the augmented Lagrangian \eqref{eq:ConsensusAugmented-theta} for $\rho=0$ has a saddle point. Under these two assumptions, it can be shown that our \ac {ADMM}-based algorithm satisfies the following
 \begin{itemize}
 \item
Convergence of residuals : $ {\boldsymbol{s}_i}^{k} - \tilde { \boldsymbol{\theta}}_i^{k} \rightarrow \mathbf{0},\; \forall\; i \in \mathbb{N}_N $, and $\boldsymbol{z}^{k} - \boldsymbol{\theta}^{k} \rightarrow \mathbf{0}$ as $k \rightarrow \infty$.
 \item
Convergence of dual variables: $\boldsymbol{u}_i^{k} \rightarrow \boldsymbol{u}_i^{*}, \forall i \in \mathbb{N}_N$, and $\boldsymbol{v}^{k} \rightarrow \boldsymbol{v}^*$ as $k \rightarrow \infty$, where $\boldsymbol{u}^{*}$ and $\boldsymbol{v}^{*}$ are the dual optimal points.
 \item
Convergence of the objective : $h(\boldsymbol{s}^{k})+g(\boldsymbol{z}^{k}) \rightarrow p^{*}$ as $k \rightarrow \infty$, which means the objective function \eqref{pro:PrimalFormulation-theta} converges to its optimal value as the algorithm proceeds.
 \end{itemize}

Also, the algorithm is terminated, when the primal and dual residuals satisfy the following stopping criteria
\begin{align}
\label{eq:stopping criteria}
 &\|e_{p_{1}}^{k} \|_2 \leq \epsilon_{1}^{pri},\quad \ \, \|e_{p_{2}}^{k} \|_2 \leq \epsilon_{2}^{pri}, \quad \ \: \|e_{p_{3}}^{k} \|_2 \leq \epsilon_{3}^{pri}
\nonumber\\
 & \|e_{d_{1}}^{k} \|_2 \leq \epsilon_{1}^{dual},\quad  \|e_{d_{2}}^{k} \|_2 \leq \epsilon_{2}^{dual}, \quad \|e_{d_{3}}^{k} \|_2 \leq \epsilon_{3}^{dual}
\end{align}
\noindent
where the primal residuals of the $k$\textsuperscript{th} iteration are given as $e_{p_{1}}^{k}= \boldsymbol{s}^{k} -  \boldsymbol{\theta}^{k}$, $e_{p_{2}}^{k}=\boldsymbol{z}^{k} - \boldsymbol{\theta}^{k} $ and $e_{p_{3}}^{k}= \boldsymbol{y}^{k} -  \boldsymbol{z}^{k}$. Similarly  $e_{d_{1}}^{k} = \rho (\boldsymbol{\theta}^{k+1} - \boldsymbol{\theta}^{k})$, $e_{d_{2}}^{k} = \rho (\boldsymbol{z}^{k} - \boldsymbol{z}^{k+1}) $ and $e_{d_{3}}^{k} = \rho (\boldsymbol{y}^{k} - \boldsymbol{y}^{k+1})$are dual residuals at iteration $k$. Also, the tolerances $\epsilon^{pri} > 0$, and $\epsilon^{dual} > 0$ can be chosen appropriately using the method described in Chapter 3 of \cite{Boyd:2011}. 



\subsection{Computational Complexity}

 \aref{alg1} needs to compute and cache $TM$ kernel matrices; however, they are computed only once in $\mathcal{O}(TMn^2)$ time. Also, as long as the number of tasks $T$ is not excessive, all the matrices can be computed and stored on a single machine, since (i) the number $M$ of kernels, is typically chosen small (\eg, we chose $M=10$), and (ii) the number $n$ of training samples per task is not usually large; if it were large, \ac{MTL} would probably not be able to offer any advantages over training each task independently. For each iteration of \aref{alg1}, $T$ independent \ac{SVM} problems are solved at a time cost of $\mathcal{O}(n^3)$ per task. Therefore, if \aref{alg2} converges in $K$ iterations, the runtime complexity of \aref{alg1} becomes $\mathcal{O}(Tn^3+KMT^2)$ per iteration. Note, though, that $K$ is not usually more than a few tens of iterations \cite{Boyd:2011}.

On the other hand, if the number of tasks $T$ is large, the nature of our problem allows our algorithm to be implemented in parallel. The $\boldsymbol{\alpha}$-update can be handled as $T$ independent optimization problems, which can be easily distributed to $T$ subsystems. Each subsystem $N$ needs to compute once and cache $M$ kernel matrices for each task. Then, for each iteration, one \ac{SVM} problem is required to be solved by each subsystem, which takes $\mathcal{O}(n^3)$ time. Moreover, our \ac {ADMM}-based algorithm updating the $\boldsymbol{\theta}$ parameters can also be implemented in parallel over $i \in \mathbb{N}_N $. Assuming that exchanging data and updates between subsystems consumes negligible time, the \ac{ADMM} only requires $\mathcal{O}(KM)$ time. Therefore, taking advantage of a distributed implementation, the complexity of \aref{alg1} is only $\mathcal{O}(n^3+KM)$ per iteration.

\section{Generalization Bound based on Rademacher Complexity}
\label{sec:GeneralizationBounds}

In this section, we provide a Rademacher complexity-based generalization bound for the \ac{HS} considered in \probref{pro:primalFormulationEquivalent}, 
which can be identified with the help of the following Proposition \footnote{Note that Proposition \ref{Propos:Proposition1} here utilizes the first part of Proposition 12 in \cite{kloft2011lp} and does not require the strong duality assumption, which is necessary for the second part of Proposition 12 in \cite{kloft2011lp}.}. 

\begin{prop}
\label{Propos:Proposition1} (Proposition 12 in \cite{kloft2011lp}, part (a))
Let $\mathcal{C} \subseteq \mathcal{X}$ and let $f, g: \mathcal{C} \mapsto \mathbb{R}$ be two functions. For any $\nu > 0$, there must exist a $\eta > 0$, such that the optimal solution of \eref{Klof-first part1} is also optimal in \eref{Klof-first part2}  
\begin{align}
\label{Klof-first part1}
\min_{x \in \mathcal{C}} f(x)+\nu g(x)
\\
\label{Klof-first part2}
\min_{x \in \mathcal{C}, g(x) \leq \eta} f(x)
\end{align}
\end{prop}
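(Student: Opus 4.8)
The plan is to reproduce the short, self-contained argument behind part (a) of Proposition 12 in \cite{kloft2011lp}: I would \emph{exhibit} the required $\eta$ explicitly in terms of the given penalized solution and then verify optimality by a one-line contradiction. No convexity, compactness, continuity, or duality of $f$ and $g$ is needed for this direction; the only ingredient used is the existence of an optimal solution of \eqref{Klof-first part1}, which is part of the hypothesis.

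Concretely, let $x^{*} \in \mathcal{C}$ be an optimal solution of \eqref{Klof-first part1}, and set $\eta \triangleq g(x^{*})$. First, $x^{*}$ is feasible for \eqref{Klof-first part2}, since $x^{*} \in \mathcal{C}$ and $g(x^{*}) = \eta \leq \eta$. Next, to show that $x^{*}$ is in fact optimal for \eqref{Klof-first part2}, suppose for contradiction that some $x' \in \mathcal{C}$ satisfies $g(x') \leq \eta$ and $f(x') < f(x^{*})$. Since $\nu > 0$ and $g(x') \leq \eta = g(x^{*})$, we have $\nu g(x') \leq \nu g(x^{*})$, whence
\begin{align*}
f(x') + \nu g(x') \;<\; f(x^{*}) + \nu g(x') \;\leq\; f(x^{*}) + \nu g(x^{*}),
\end{align*}
contradicting the optimality of $x^{*}$ in \eqref{Klof-first part1}. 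Hence $f(x^{*}) \leq f(x)$ for every feasible $x$ of \eqref{Klof-first part2}, so $x^{*}$ solves \eqref{Klof-first part2} with this choice of $\eta$.

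The only delicate point — and the one I would flag as the ``obstacle'' — is the requirement $\eta > 0$. The value $\eta = g(x^{*})$ is essentially forced: if one enlarged the constraint level strictly beyond $g(x^{*})$, the inequality $\nu g(x') \leq \nu g(x^{*})$ exploited above could fail, and with it the argument, so the bound cannot simply be loosened. Consequently the strict positivity of $\eta$ must be inherited from $g(x^{*}) > 0$, which holds in the regime where the proposition is invoked (the \ac{SVM}/\ac{MKL} objective contains a nontrivial data-fitting term, so the degenerate point at which the block-norm penalty $g$ vanishes is not optimal). It is precisely because only part (a) — the ``easy'' implication penalized $\Rightarrow$ constrained — is needed here that no appeal to Lagrangian strong duality is required, unlike for the converse direction treated in \cite{kloft2011lp}.
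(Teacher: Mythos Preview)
The paper does not supply its own proof of this proposition; it merely quotes it from \cite{kloft2011lp} and uses it as a black box to pass from the penalized formulation \eqref{pro:primalFormulationEquivalent} to the constrained one \eqref{pro:primalFormulationEquivalent-2}. Your argument is exactly the standard one-line proof of this direction (choose $\eta = g(x^{*})$ and derive a contradiction), and it is correct. Your flag about the strict positivity $\eta > 0$ is also appropriate: the choice $\eta = g(x^{*})$ is forced by the argument, so the stated conclusion $\eta > 0$ really requires $g(x^{*}) > 0$ as an implicit hypothesis, which indeed holds in the paper's application since the group-lasso term $\sum_{t<s}\|\boldsymbol{\theta}_t-\boldsymbol{\theta}_s\|_2$ does not vanish at the optimizer for generic data and finite $\lambda$.
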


Using Proposition \ref{Propos:Proposition1}, one can show that \probref {pro:primalFormulationEquivalent} is equivalent to the following problem
\noindent
\begin{align}
	\label{pro:primalFormulationEquivalent-2}
	 & \min_{\boldsymbol{w}\in\varOmega^{'}\left( \boldsymbol{w}\right)} \ C \sum_{t=1}^{T}\sum_{i=1}^{n} l\left( \boldsymbol{w}_t,\phi_t\left( x_t^i \right),y_t^i\right)
	\nonumber\\
    \varOmega^{'} \left( \boldsymbol{w}\right)  \triangleq  &
	\lbrace \boldsymbol{w} =  \left( \boldsymbol{w}_1,\cdots,\boldsymbol{w}_T\right) : \boldsymbol{w}_t \in \mathcal{H}_{t,\boldsymbol{\theta}}, \boldsymbol{\theta} \in \varOmega^{'}\left( \boldsymbol{\theta}\right),\;  \| \boldsymbol{w}_t \|^2 \leq R_{t},\; t \in \mathbb{N}_T \rbrace 
\end{align}
\noindent
where
\begin{align}
    \varOmega^{'}\left ( \boldsymbol{\theta}\right)  \triangleq \varOmega\left ( \boldsymbol{\theta}\right) \cap
	\left\lbrace  \boldsymbol{\theta}  =  \left( \boldsymbol{\theta}_t ,\cdots,\boldsymbol{\theta}_T  \right) : \sum_{t=1}^{T-1}\sum_{s>t}^{T} \left\| \boldsymbol{\theta}_t - \boldsymbol{\theta}_s \right\|_2 \leq \gamma  \right\rbrace 
\nonumber
\end{align}

The goal here is to choose the $\boldsymbol{w}$ and $\boldsymbol{\theta}$ from their relevant feasible sets, such that the objective function of \eqref{pro:primalFormulationEquivalent-2} is minimized. Therefore, the relevant hypothesis space for \probref{pro:primalFormulationEquivalent-2} becomes 
\begin{align}
     \label{eq:hypothesis}
     \mathcal{F}  \triangleq 
     \left\lbrace   x \mapsto \left[  \langle \boldsymbol{w}_1, \boldsymbol{\phi}_1 \rangle,\ldots,\langle \boldsymbol{w}_T, \boldsymbol{\phi}_T \rangle  \right] ^{'}:  \forall t  \boldsymbol{w}_t \in \mathcal{H}_{t,\boldsymbol{\theta}}, \| \boldsymbol{w}_t \|^2 \leq R_{t},  \boldsymbol{\theta} \in \varOmega^{'} ({\boldsymbol{\theta}}) \right\rbrace 
\end{align}\

Note that finding the \ac {ERC} of $\mathcal{F}$ is complicated due to the non-smooth nature of the constraint $\sum_{t=1}^{T-1}\sum_{s>t}^{T} \left\| \boldsymbol{\theta}_t - \boldsymbol{\theta}_s \right\|_2 \leq \gamma$. Instead, we will find the \ac {ERC} of the \ac {HS} $\mathcal{H}$ defined in \eqref{eq:hypothesisH}; notice that $\mathcal{F} \subseteq \mathcal{H}$.    

\begin{align}
     \label{eq:hypothesisH}
     \mathcal{H}  \triangleq 
     \left\lbrace   x \mapsto \left[  \langle \boldsymbol{w}_1, \boldsymbol{\phi}_1 \rangle,\ldots,\langle \boldsymbol{w}_T, \boldsymbol{\phi}_T \rangle  \right] ^{'}: \forall t  \boldsymbol{w}_t \in \mathcal{H}_{t,\boldsymbol{\theta}},  \| \boldsymbol{w}_t \|^2 \leq R_{t},  \boldsymbol{\theta} \in \varOmega^{''} ({\boldsymbol{\theta}}) \right\rbrace 
\end{align}
\noindent
where 
\begin{align}
\label{ThetaSpace3}
    \varOmega^{''}\left ( \boldsymbol{\theta}\right)  \triangleq \varOmega\left ( \boldsymbol{\theta}\right) \cap
	\left\lbrace   \boldsymbol{\theta}  =  \left( \boldsymbol{\theta}_t ,\cdots,\boldsymbol{\theta}_T  \right) : \sum_{t=1}^{T-1}\sum_{s>t}^{T} \left\| \boldsymbol{\theta}_t - \boldsymbol{\theta}_s \right\|_2^2 \leq \gamma^2  \right\rbrace 
\end{align}

Using the first part of Theorem (12) in \cite{bartlett2003rademacher}, it can be shown that the \ac {ERC} of $\mathcal{H}$ upper bounds the \ac {ERC} of function class $\mathcal{F}$. Thus, the bound derived for $\mathcal{H}$ is also valid for $\mathcal{F}$.
The following theorem provides the generalization bound for $\mathcal{H}$.

\begin{thrm}
\label{MainTheorem}
Let $\mathcal{H}$ defined in \eqref{eq:hypothesisH} be the multi-task \ac {HS} for a class of functions $\boldsymbol{f}=(f_1,\ldots,f_T) : \mathcal{X}\mapsto \mathbb{R}^T$. Then for all $f \in \mathcal{H}$, for $\delta > 0$ and for fixed $\rho > 0$, with probability at least $ 1- \delta $ it holds that
\begin{align}
\label{eq:REC-Bound-Multi-task-2}
 R(\boldsymbol{f}) \leq \hat{R}_{\rho}(\boldsymbol{f}) + \frac{2}{\rho} \hat{\mathfrak{R}}_{S} (\mathcal{H}) + 3 \sqrt{\dfrac{\log\frac{1}{\delta}}{2Tn}}
\end{align}
\noindent

where

\begin{align}
\label{Rademacher-Upperbound-final-1}
\hat{\mathfrak{R}}_{S} \left( \mathcal{H} \right) \leq \hat{\mathfrak{R}}_{ub} \left( \mathcal{H} \right) = \sqrt{ \frac { \sqrt{3} \gamma R M} {n T} } 
\end{align}

\noindent
where $\hat{\mathfrak{R}}_S (\mathcal{H})$, the \ac {ERC} of $\mathcal{H}$, is given as 
\begin{align}
 \label{eq:RademacherMulti-task}
   \hat{\mathfrak{R}}_{S} (\mathcal{H}) = \frac{1}{nT} \mathrm{E}_{\sigma} \left\lbrace  \sup_{\boldsymbol{f}=(f_1,\ldots,f_T) \in \mathcal{F}} \sum_{t=1}^{T} \sum_{i=1}^{n} \sigma_{t}^{i} f_{t}(x_{t}^{i}) \Biggr \vert \left\{ x_t^i \right\}_{t \in \mathbb{N}_T, i \in \mathbb{N}_n}  \right\rbrace 
 \end{align}
\noindent
the $\rho$-empirical large margin error $\hat{R}_{\rho} (\boldsymbol{f})$, for the training sample $S = \left\{ \left( x_t^i, y_t^i \right) \right\}_{i,t=1}^{n,T}$ is defined as
\noindent
\begin{align}
\hat{R}_{\rho} (\boldsymbol{f})= \frac{1}{nT} \sum_{t=1}^{T} \sum_{i=1}^{n} \min \left( 1, [1-y_{t}^{i} f_{t}(x_{t}^{i})/\rho]_+   \right) 
\nonumber
\end{align}
\noindent
Also, $R(f)= \Pr [y f(x) < 0] $ is the expected risk \wrt \,  0-1 loss, $n$ is the number of training samples for each task, $T$ is the number of tasks to be trained, and $M$ is the number of kernel functions utilized for \ac {MKL}.
\end{thrm}
The proof of this theorem is omitted due to space constraints. Based on \thrmref{MainTheorem}, the second term in \eref{eq:REC-Bound-Multi-task-2}, the upper bound for \ac {ERC} of $\mathcal{H}$, decreases as the number of tasks increases. Therefore, it is reasonable to expect that the generalization performance to improve, when the number $T$ of tasks or the number $n$ of training samples increase. Also, due to the formulation's group lasso ($L_1/L_2$-norm) regularizer on the pair-wise \ac{MKL} weight differences, the \ac{ERC} in \eref{Rademacher-Upperbound-final-1} depends on $M$ as $\mathcal{O}{\sqrt{M}}$. It is worth mentioning, that, while this could be improved to $\mathcal{O}{\sqrt{\log M}}$ as in \cite{cortes2010generalization}, if one considers instead a $L_p/L_q$-norm regularizer, we won't pursue this avenue here. Let us finally note, that \eref{eq:REC-Bound-Multi-task-2} allows one to construct data-dependent confidence intervals for the true, pooled (averaged over tasks) misclassification rate of the \ac{MTL} problem under consideration.

\section{Experiments}
\label{sec:Experiments}
In this section, we demonstrate the merit of the proposed model via a series of comparative experiments. For reference, we consider two baseline methods referred to as \textbf{STL} and \textbf{MTL}, which present the two extreme cases discussed in \sref{sec:Learning Multiple Tasks with a Group Lasso Penalty}. We also compare our method with five state-of-the-art methods which, like ours, fall under the \ac{CMTL} family of approaches. These methods are briefly described below.
\begin{itemize}
\item
\textbf{STL}: single-task learning approach used as a baseline, according to which each task is individually trained via a traditional single-task \ac {MKL} strategy.
\item
\textbf{MTL}: a typical \ac {MTL} approach, for which all tasks share a common feature space. An \ac{SVM}-based formulation with multiple kernel functions was utilized and the common \ac{MKL} parameters for all tasks were learned during training.
\item
\textbf{CMTL} \cite{jacob2009clustered}: in this work, the tasks are grouped into disjoint clusters, such that the model parameters of the tasks belonging to the same group are close to each other.
\item
\textbf{Whom} \cite{kang2011learning}: clusters the task, into disjoint groups and assumes that tasks of the same group can jointly learn a shared feature representation.
\item
\textbf{FlexClus} \cite{zhong2012convex}: a flexible clustering structure of tasks is assumed, which can vary from feature to feature.
\item
\textbf{CoClus} \cite{rohtua2015}: a co-clustering structure is assumed aiming to capture both the feature and task relationship between tasks.
\item
\textbf{MeTaG} \cite{Han2015}: a multi-level grouping structure is constructed by decomposing the matrix of tasks' parameters into a sum of components, each of which corresponds to one level and is regularized with a $L_2$-norm on the pairwise difference between parameters of all the tasks.  
\end{itemize}

\subsection{Experimental Settings}
\label{sec:exp_setting}
For all experiments, all kernel-based methods (including \textbf{STL}, \textbf{MTL} and our method) utilized $1$ Linear, $1$ Polynomial with degree $2$, and $8$ Gaussian kernels with spread parameters $\left\{2^0,\ldots,2^7\right\}$ for \ac{MKL}. All kernel functions were normalized as $k(\boldsymbol{x}, \boldsymbol{y}) \leftarrow k(\boldsymbol{x}, \boldsymbol{y})/\sqrt{k(\boldsymbol{x}, \boldsymbol{x}) k(\boldsymbol{y}, \boldsymbol{y})}$. Moreover, for \textbf{CMTL}, \textbf{Whom} and \textbf{CoClus} methods, which require the number of task clusters to be pre-specified, cross-validation over the set $\left\{1,\ldots, T/2 \right\}$ was used to select the optimal number of clusters. Also, the regularization parameters of all methods were chosen via cross-validation over the set $\left\{2^{-10},\ldots,2^{10}\right\}$.

\subsection{Experimental Results}

We assess the performance of our proposed method compared to the other methods on $7$ widely-used data sets including $3$ real-world data sets: Wall-Following Robot Navigation (\textit{Robot}), Statlog Vehicle Silhouettes (\textit{Vehicle}) and Statlog Image Segmentation (\textit{Image}) from the UCI repository \cite{Frank2010}, $2$ handwritten digit data sets, namely MNIST Handwritten Digit (\textit{MNIST}) and Pen-Based Recognition of Handwritten Digits (\textit{Pen}), as well as \textit{Letter} and \textit{Landmine}. 

The data sets from the UCI repository correspond to three multi-class problems. In the \textit{Robot} data set, each sample is labeled as: ``Move-Forward”, ``SlightRight-Turn", ``Sharp-Right-Turn" and ``Slight-Left-Turn". These classes are designed to navigate a robot through a room following the wall in a clockwise direction. The \textit{Vehicle} data set describes four different types of vehicles as ``4 Opel", ``SAAB", ``Bus" and ``Van".  On the other hand, the instances of the \textit{Image} data set were drawn randomly from a database of 7 outdoor images which are labeled as ``Sky", ``Foliage", ``Cement", ``Window", ``Path" and ``Grass". 

Also, two multi-class handwritten digit data sets, namely \textit{MNIST} and \textit{Pen}, consist of samples of handwritten digits from 0 to 9. Each example is labeled as one of ten classes. A one-versus-one strategy was adopted to cast all multi-class learning problems into \ac{MTL} problems, and  the average classification accuracy across tasks was calculated for each data set. Moreover, an equal number of samples from each class was chosen for training for all five multi-class problems.
 
We also compare our method on two widely-used multi-task data sets, namely the \textit{Letter} and \textit{Landmine} data sets. The former one is a collection of handwritten words collected by Rob Kassel of MIT's spoken Language System Group, and involves eight tasks: `C' vs. `E', `G' vs. `Y', `M' vs. `N', `A' vs. `G', `I' vs. `J', `A' vs. `O', `F' vs. `T' and `H' vs. `N'. Each letter is represented by a 8 by 16 pixel image, which forms a 128 dimensional feature vector per sample.  We randomly chose 200 samples for each letter. An exception is letter ‘J’, for which only 189 samples were available. The \textit{Landmine} data set consists of 29 binary classification tasks collected from various landmine fields.  The objective is to recognize whether there is a landmine or not based on a region's characteristics, which are described by four moment-based features, three correlation-based features, one energy ratio feature, and one spatial variance feature. 

In all our experiments, for all methods, we considered training set sizes of $10\%$, $20\%$ and $50\%$ of the original data set to investigate the influence of the data set size on generalization performance. An exception was the \textit{Landmine} data set, for which we used $20\%$ and $50\%$ of the data set for training purposes due to its small size. The rest of data were split into equal sizes for validation and testing. 

\begin{table}[htpb]
 \begin{center}
 \caption{Experimental comparison between our method and seven benchmark methods}
 \label{multi-task}
 \tabcolsep=0.12cm
\resizebox{\columnwidth}{!}{%
 \begin{tabular}{l c c c c c c c c}
  \toprule

  10\%&	STL$^{(7)}$& 	MTL$^{(5.42)}$& 	CMTL$^{(6.33)}$& 	Whom$^{(3.25)}$& 	FlexClus$^{(4.33)}$&	Coclus$^{(4)}$&	MetaG$^{(5)}$&	Our Method$^\textbf{(1.67)}$ \\
   \midrule								
  Robot &	$\text{84.51}^{(7)}$&	$\text{84.82}^{(6)}$&	$\text{84.15}^{(8)}$&	$\textbf{88.90}^{(1)}$&	$\text{88.34}^{(4)}$&	$\text{87.83}^{(5)}$&	$\text{88.77}^{(2)}$&	$\text{88.67}^{(3)}$\\
  Vehicle&	$\text{79.73}^{(8)}$&	$\text{80.38}^{(6)}$&	$\text{80.23}^{(7)}$&	$\text{83.14}^{(4)}$&	$\text{82.45}^{(5)}$&	$\textbf{86.79}^{(1)}$&	$\text{83.53}^{(3)}$&	$\text{84.51}^{(2)}$\\
  Image&	$\text{97.08}^{(7)}$&	$\text{97.43}^{(3)}$&	$\text{97.09}^{(6)}$&	$\text{97.27}^{(4)}$&	$\text{98.05}^{(2)}$&	$\text{97.24}^{(5)}$&	$\text{97.05}^{(8)}$&	$\textbf{98.19}^{(1)}$\\
  Pen&	$\text{98.16}^{(7)}$&	$\text{98.28}^{(5.5)}$&	$\text{95.78}^{(8)}$&	$\text{98.28}^{(5.5)}$&	$\text{98.67}^{(3)}$&	$\textbf{99.26}^{(1)}$&	$\text{98.57}^{(4)}$&	$\text{99.12}^{(2)}$\\
  MNIST&	$\text{94.09}^{(7)}$&	$\text{94.87}^{(4)}$&	$\text{94.49}^{(6)}$&	$\text{95.56}^{(3)}$&	$\text{94.59}^{(5)}$&	$\text{93.09}^{(8)}$&	$\text{96.13}^{(2)}$&	$\textbf{96.70}^{(1)}$\\
 Letter&	$\text{84.12}^{(6)}$&	$\text{83.12}^{(8)}$&	$\text{85.62}^{(3)}$&	$\text{86.82}^{(2)}$&	$\text{83.72}^{(7)}$&	$\text{85.46}^{(4)}$&	$\text{85.41}^{(5)}$&	$\textbf{87.41}^{(1)}$\\
 \midrule								
 								
   20\%&	STL$^{(6)}$& 	MTL$^{(4.43)}$& 	CMTL$^{(6.14)}$& 	Whom$^{(3.29)}$& 	FlexClus$^{(5.57)}$&	Coclus$^{(4.57)}$&	MetaG$^{(4.71)}$&	Our Method$^\textbf{(1.14)}$ \\
   \midrule								
  Robot&	$\text{87.67}^{(7)}$&	$\text{88.23}^{(6)}$&	$\text{85.08}^{(8)}$&	$\textbf{90.76}^{(1)}$&	$\text{90.15}^{(3)}$&	$\text{88.43}^{(5)}$&	$\text{89.12}^{(4)}$&	$\text{90.34}^{(2)}$\\
  Vehicle&	$\text{85.88}^{(4)}$&	$\text{86.16}^{(3)}$&	$\text{82.29}^{(8)}$&	$\text{85.67}^{(6)}$&	$\text{85.29}^{(7)}$&	$\text{87.15}^{(2)}$&	$\text{85.78}^{(5)}$&	$\textbf{87.76}^{(1)}$\\
  Image&	$\text{97.41}^{(6)}$&	$\text{98.02}^{(3)}$&	$\text{97.32}^{(7)}$&	$\text{98.46}^{(2)}$&	$\text{97.44}^{(5)}$&	$\text{97.50}^{(4)}$&	$\text{97.29}^{(8)}$&	$\textbf{98.54}^{(1)}$\\
  Pen&	$\text{98.57}^{(7)}$&	$\text{99.01}^{(6)}$&	$\text{96.06}^{(8)}$&	$\text{99.14}^{(3)}$&	$\text{99.13}^{(4)}$&	$\text{99.30}^{(2)}$&	$\text{99.02}^{(4)}$&	$\textbf{99.63}^{(1)}$\\
  MNIST&	$\text{96.13}^{(6)}$&	$\text{96.71}^{(4)}$&	$\text{96.56}^{(5)}$&	$\text{96.76}^{(3)}$&	$\text{95.04}^{(7)}$&	$\text{94.09}^{(8)}$&	$\text{96.84}^{(2)}$&	$\textbf{97.86}^{(1)}$\\
 Landmine&	$\text{58.76}^{(8)}$&	$\text{61.89}^{(7)}$&	$\text{65.28}^{(2)}$&	$\text{62.53}^{(5)}$&	$\text{62.46}^{(6)}$&	$\text{63.52}^{(3)}$&	$\text{62.59}^{(4)}$&	$\textbf{65.82}^{(1)}$\\
 Letter&	$\text{88.75}^{(4)}$&	$\text{89.98}^{(2)}$&	$\text{88.24}^{(5)}$&	$\text{88.88}^{(3)}$&	$\text{83.79}^{(7)}$&	$\text{82.26}^{(8)}$&	$\text{87.99}^{(6)}$&	$\textbf{90.72}^{(1)}$\\
 \midrule								
   								
   50\%&	STL$^{(5.64)}$& 	MTL$^{(3.85)}$& 	CMTL$^{(6.29)}$& 	Whom$^{(3.29)}$& 	FlexClus$^{(6.21)}$&	Coclus$^{(5.29)}$ &	MetaG$^{(4.42)}$&	Our Method$^\textbf{(1)}$ \\
   \midrule								
  Robot&	$\text{91.26}^{(5.5)}$&	$\text{91.49}^{(3)}$&	$\text{86.26}^{(8)}$&	$\text{91.70}^{(2)}$&	$\text{91.26}^{(5.5)}$&	$\text{89.04}^{(7)}$&	$\text{91.27}^{(4)}$&	$\textbf{92.41}^{(1)}$\\
  Vehicle&	$\text{88.33}^{(3)}$&	$\text{88.71}^{(2)}$&	$\text{83.91}^{(8)}$&	$\text{87.3}^{(5)}$&	$\text{86.72}^{(7)}$&	$\text{87.55}^{(4)}$&	$\text{86.81}^{(6)}$&	$\textbf{89.83}^{(1)}$\\
  Image&	$\text{98.40}^{(6)}$&	$\text{98.43}^{(5)}$&	$\text{97.56}^{(8)}$&	$\text{98.58}^{(2)}$&	$\text{98.04}^{(7)}$&	$\text{98.52}^{(3)}$&	$\text{98.49}^{(4)}$&	$\textbf{99.07}^{(1)}$\\
  Pen&	$\text{98.77}^{(7)}$&	$\text{99.23}^{(5)}$&	$\text{96.17}^{(8)}$&	$\text{99.32}^{(4)}$&	$\text{99.33}^{(3)}$&	$\text{99.34}^{(2)}$&	$\text{99.21}^{(6)}$&	$\textbf{99.77}^{(1)}$\\
  MNIST&	$\text{97.20}^{(6)}$&	$\text{97.37}^{(4)}$&	$\text{97.31}^{(5)}$&	$\text{97.78}^{(3)}$&	$\text{96.60}^{(7)}$&	$\text{95.87}^{(8)}$&	$\text{98.46}^{(2)}$&	$\textbf{98.64}^{(1)}$\\
 Landmine&	$\text{63.76}^{(8)}$&	$\text{64.98}^{(6)}$&	$\text{66.76}^{(2)}$&	$\text{65.57}^{(4)}$&	$\text{64.87}^{(7)}$&	$\text{65.15}^{(5)}$&	$\text{66.24}^{(3)}$&	$\textbf{67.15}^{(1)}$\\
 Letter&	$\text{91.18}^{(4)}$&	$\text{91.62}^{(2)}$&	$\text{90.97}^{(5)}$&	$\text{91.25}^{(3)}$&	$\text{86.47}^{(7)}$&	$\text{86.27}^{(8)}$&	$\text{90.66}^{(6)}$&	$\textbf{92.49}^{(1)}$\\
 
 \bottomrule
 \end{tabular}
 }
 \end{center}
 \end{table}

In \tref{multi-task}, we report the average classification accuracy over $20$ runs of randomly sampled training sets for each experiment. Note that we utilized the method proposed in \cite{demvsar2006statistical} for our statistical analysis. More specifically, Friedman's and Holm's post-hoc tests at significance level $\alpha = 0.05$ were employed to compare our proposed method with the other methods.

 As shown in \tref{multi-task}, for each data set, Friedman's test ranks the best performing model as first, the second best as second and so on. The superscript next to each value in \tref{multi-task} indicates the rank of the corresponding model on the relevant data set, while the superscript next to each model reflects its average rank over all data sets for the corresponding training set size. 
Note that methods depicted in boldface are deemed statistically similar to our model, since their corresponding $p$-values are not smaller than the adjusted $\alpha$ values obtained by Holm's post-hoc test. Overall, it can be observed that our method dominates three, six and five out of seven methods, when trained with $10\%$, $20\%$ and $50\%$ training set sizes respectively.

\begin{table}[htpb] 
\begin{center}
 \caption{Comparison of our method against the other methods with the Holm test}
 \label{multi-task2}
  \tabcolsep=0.2cm
 \begin{tabular}{l c c c c c c c}
  \toprule
 
  10\%&	STL& 	MTL& 	CMTL& 	\textbf{Whom}& 	\textbf{FlexClus}&	\textbf{Coclus} &	\textbf{MeTaG}\\
   \midrule							
 Test statistic&	$\text{3.93}$&	$\text{2.13}$&	$\text{3.49}$&	$\text{1.25}$&	$\text{2.40}$&	$\text{2.62}$&	$\text{2.29}$\\
  p value&	$\text{0.0005}$&	$\text{0.0138}$&	$\text{0.0022}$&	$\textbf{0.2869}$&	$\textbf{0.0777}$&	$\textbf{0.1214}$&	$\textbf{0.1214}$\\
 Adjusted $\alpha$ &	$\text{0.0071}$&	$\text{0.0083}$&	$\text{0.0100}$&	$\text{0.0125}$&	$\text{0.01667}$&	$\text{0.0250}$&	$\text{0.0500}$\\
 \midrule	
 	
20\%&	STL& 	MTL& 	CMTL& 	\textbf{Whom}& 	\text{FlexClus}&	\text{Coclus} &	\text{MeTaG}\\
\midrule							
Test statistic&	$\text{3.71}$&	$\text{2.51}$&	$\text{3.82}$&	$\text{1.64}$&	$\text{3.38}$&	$\text{2.62}$&	$\text{2.73}$\\
p value&	$\text{0.00021}$&	$\text{0.0121}$&	$\text{0.0001}$&	$\textbf{0.1017}$&	$\text{0.0007}$&	$\text{0.0088}$&	$\text{0.0064}$\\
Adjusted $\alpha$ &	$\text{0.0083}$&	$\text{0.0250}$&	$\text{0.0071}$&	$\text{0.0500}$&	$\text{0.0100}$&	$\text{0.01667}$&	$\text{0.0125}$\\
\midrule							
 								
50\%&	STL& 	\textbf{MTL}& 	CMTL& 	\textbf{Whom}& 	\text{FlexClus}&	\text{Coclus} &	\text{MeTaG}\\
 \midrule							
 Test statistic&	$\text{3.55}$&	$\text{2.18}$&	$\text{4.04}$&	$\text{1.75}$&	$\text{3.98}$&	$\text{3.27}$&	$\text{2.61}$\\
 p value&	$\text{0.0004}$&	$\textbf{0.0291}$&	$\text{0.0001}$&	$\textbf{0.0809}$&	$\text{0.0001}$&	$\text{0.0011}$&	$\text{0.0089}$\\
 Adjusted $\alpha$ &	$\text{0.0100}$&	$\text{0.0250}$&	$\text{0.0071}$&	$\text{0.0500}$&	$\text{0.0083}$&	$\text{0.0125}$&	$\text{0.01667}$\\

 \bottomrule
 \end{tabular}
 \end{center}
 \end{table}

Also, in \fref{fig:Feature space}, we provide better insight of how the grouping of task feature spaces might be determined in our framework. For the purpose of visualization, we applied two Gaussian kernel functions with spread parameters $2$ and $2^8$ and used the \textit{Letter} multi-task data set.

In this figure, the $x$ and $y$ axes represent the weights of these two kernel functions for each task.  From \fref{fig:Feature space} \subref{fig:subfig1}, when a small training size ($10\%$) is chosen, it can be seen that our framework yields a cluster of $3$ tasks, namely \{``A" vs ``G", ``A" vs ``O", ``G" vs ``Y"\} that share a common feature space to benefit from each other's data. However, as the number $n$ of training samples per task increases, every task is allowed to employ its own feature space to guarantee good performance. This is shown in \fref{fig:Feature space} \subref{fig:subfig2}, which displays the results obtained for a $50\%$ training set size. Note, that the displayed \ac{MKL} weights lie on the $\theta_1 + \theta_2 = 1$ line due to the framework's $L_1$ \ac{MKL} weight constraint.

\begin{figure}[htbp]
\centering
\subfigure[Traning set size $10\%$]{
   \includegraphics[width=0.48\textwidth]{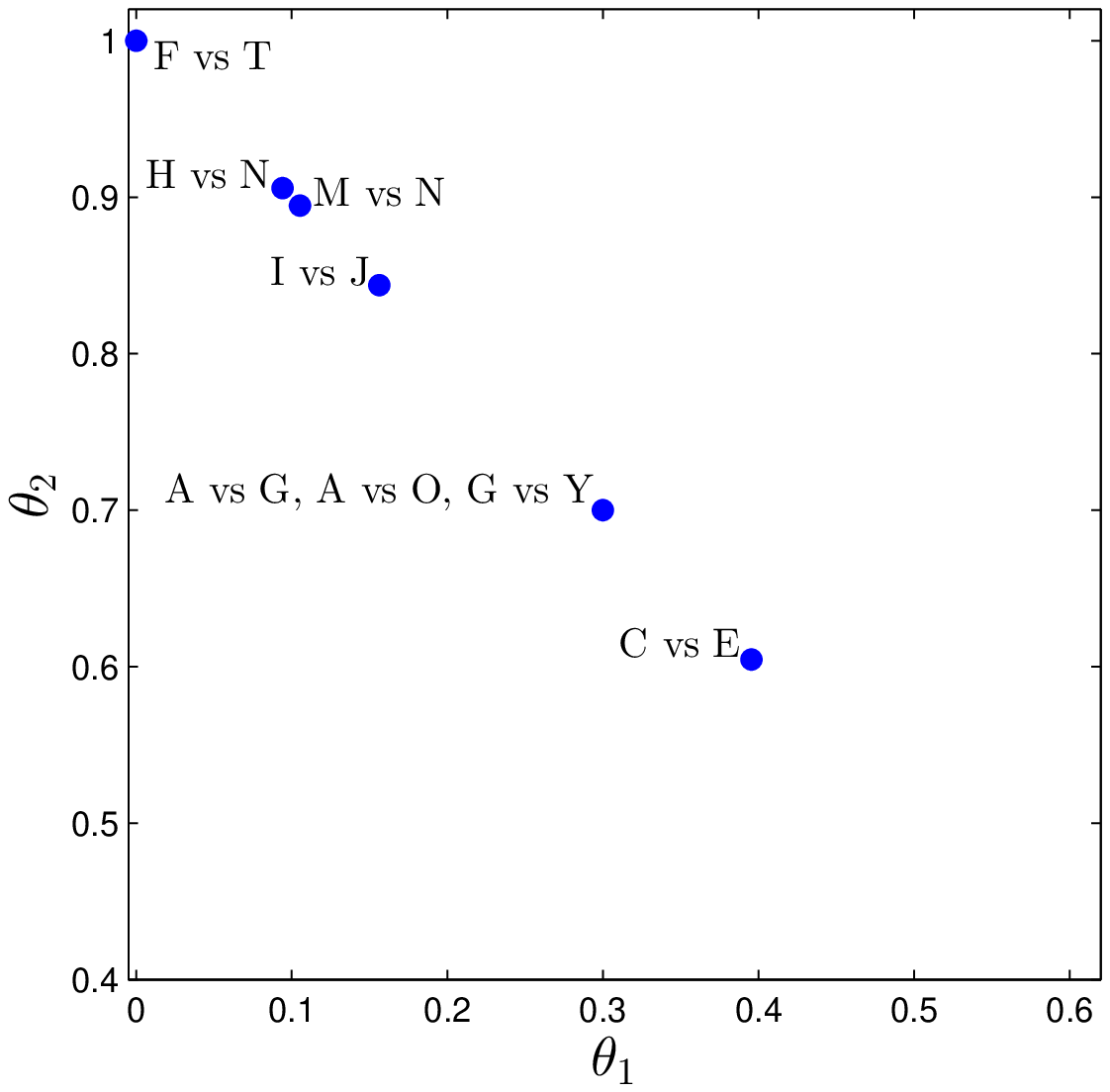}
    \label{fig:subfig1}
}
\subfigure[Traning set size $50\%$]{
    \includegraphics[width=0.48\textwidth]{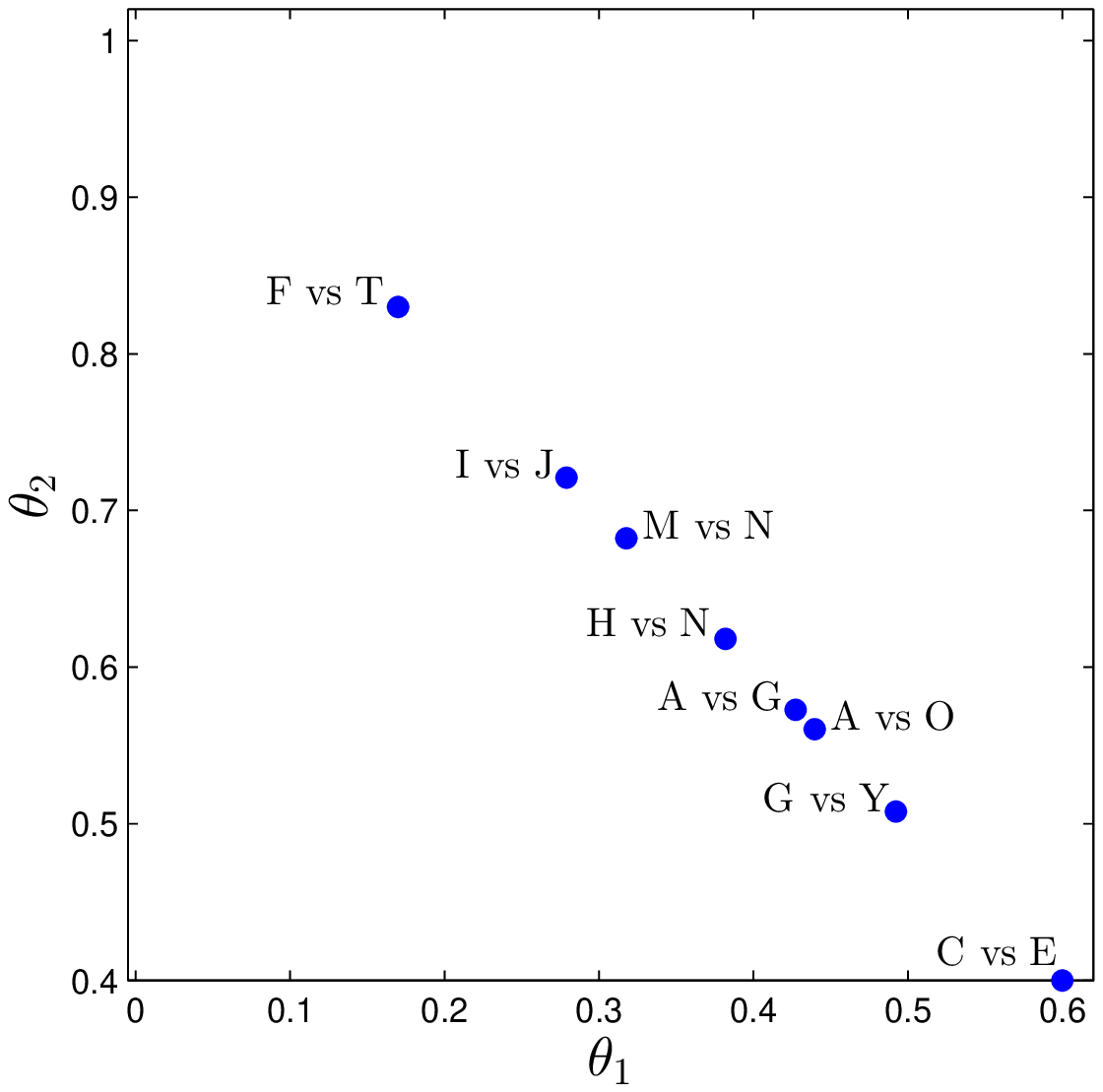}
    \label{fig:subfig2}
}

\caption{Feature space parameters for \textit{Letter} multi-task data set }
\label{fig:Feature space}
\end{figure}



\section{Conclusions}
\label{sec:Conclusions}

In this work, we proposed a novel \ac{MT-MKL} framework for \ac{SVM}-based binary classification, where a flexible group structure is determined between each pair of tasks. In this framework, tasks are allowed to have a common, similar, or distinct feature spaces. Recently, some \ac {MTL} frameworks have been proposed, which also consider clustering strategies to capture task relatedness. However, our method is capable of modeling a more general type of task relationship, where tasks may be implicitly grouped according to a notion of feature space similarity. Also, our proposed optimization algorithm allows for a distributed implementation, which can be significantly advantageous for \ac{MTL} settings involving large number of tasks. The performance advantages reported on $7$ multi-task \ac{SVM}-based classification problems largely seem to justify our arguments in favor of our framework.

\section*{Acknowledgments} 

N. Yousefi acknowledges support from National Science Foundation (NSF) grants No. 0806931 and No. 1161228. Moreover, M. Georgiopoulos acknowledges partial support from NSF grants No. 0806931, No. 0963146, No. 1200566, No. 1161228, and No. 1356233. Finally, G. C. Anagnostopoulos acknowledges partial support from NSF grant No. 1263011. Any opinions, findings, and conclusions or recommendations expressed in this material are those of the authors and do not necessarily reflect the views of the NSF.

\bibliographystyle{plainurl}
\bibliography{References}

\appendix

\appendix
\newpage
\section*{Supplementary Materials}

A useful lemmas in deriving the generalization bound of \thrmref{MainTheorem} is provided next.




\begin{lemm}
  \label{lemm:lemma2}
  Let $\mathbf{A}, \mathbf{B} \in \mathbb{R}^{N \times N}$ and let $\boldsymbol{\sigma} \in \mathbb{R}^N$ be a vector of independent Rademacher random variables. Let $\circ$ denote the Hadamard (component-wise) matrix product. Then, it holds that
  
  \begin{align}
  	\mathbb{E}_{\boldsymbol{\sigma}} \left\{ \left( \boldsymbol{\sigma}' \mathbf{A} \boldsymbol{\sigma} \right)    \left( \boldsymbol{\sigma}' \mathbf{B} \boldsymbol{\sigma} \right) \right\}  = \trace{\mathbf{A}}  \trace{\mathbf{B}}  + 2 \left( \trace{\mathbf{A}\mathbf{B}} - \trace{\mathbf{A} \circ \mathbf{B}}  \right)
  \end{align}

  \end{lemm}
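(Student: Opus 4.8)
The plan is to compute the expectation by expanding both quadratic forms into sums and using the moment structure of independent Rademacher variables. Write $\boldsymbol{\sigma}' \mathbf{A} \boldsymbol{\sigma} = \sum_{i,j} A_{ij} \sigma_i \sigma_j$ and $\boldsymbol{\sigma}' \mathbf{B} \boldsymbol{\sigma} = \sum_{k,l} B_{kl} \sigma_k \sigma_l$, so that the product's expectation is $\sum_{i,j,k,l} A_{ij} B_{kl} \, \mathbb{E}[\sigma_i \sigma_j \sigma_k \sigma_l]$. Since the $\sigma_i$ are independent, mean zero, and satisfy $\sigma_i^2 = 1$, the fourth moment $\mathbb{E}[\sigma_i \sigma_j \sigma_k \sigma_l]$ is nonzero (and equal to $1$) exactly when the indices can be paired up so every value appears an even number of times: namely when $i=j=k=l$; when $i=j$ and $k=l$ with $i \neq k$; when $i=k$ and $j=l$ with $i \neq j$; and when $i=l$ and $j=k$ with $i \neq j$. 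All other index patterns contribute zero.

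First I would collect the contributions from each surviving case. The case $i=j, k=l, i\neq k$ gives $\sum_{i \neq k} A_{ii} B_{kk}$; adding back the diagonal and subtracting it, this is $\trace{\mathbf{A}}\trace{\mathbf{B}} - \sum_i A_{ii} B_{ii}$. The case $i=k, j=l, i \neq j$ gives $\sum_{i \neq j} A_{ij} B_{ij} = \trace{\mathbf{A}'\mathbf{B}} - \sum_i A_{ii}B_{ii}$ (or equivalently written with $\mathbf{A}\mathbf{B}'$; one must be slightly careful about transposes if $\mathbf{A},\mathbf{B}$ are not symmetric, but the statement as written presumably intends the symmetric case or the identity $\sum_{ij} A_{ij}B_{ij} = \trace{\mathbf{A}^\top \mathbf{B}}$). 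The case $i=l, j=k, i \neq j$ gives $\sum_{i \neq j} A_{ij} B_{ji} = \trace{\mathbf{A}\mathbf{B}} - \sum_i A_{ii}B_{ii}$. Finally the fully-diagonal case $i=j=k=l$ contributes $\sum_i A_{ii} B_{ii}$.

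Then I would add these four pieces together. Writing $d \triangleq \sum_i A_{ii}B_{ii} = \trace{\mathbf{A} \circ \mathbf{B}}$ and (taking the symmetric reading) the two off-diagonal trace terms as each $\trace{\mathbf{A}\mathbf{B}} - d$, the sum becomes $\bigl(\trace{\mathbf{A}}\trace{\mathbf{B}} - d\bigr) + \bigl(\trace{\mathbf{A}\mathbf{B}} - d\bigr) + \bigl(\trace{\mathbf{A}\mathbf{B}} - d\bigr) + d = \trace{\mathbf{A}}\trace{\mathbf{B}} + 2\trace{\mathbf{A}\mathbf{B}} - 2d$, which is exactly the claimed identity $\trace{\mathbf{A}}\trace{\mathbf{B}} + 2\bigl(\trace{\mathbf{A}\mathbf{B}} - \trace{\mathbf{A} \circ \mathbf{B}}\bigr)$.

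The main obstacle is purely bookkeeping: correctly enumerating the index-coincidence patterns with nonzero fourth moment and making sure overlapping cases (e.g. when two of the ``distinct'' pairs happen to coincide, recovering $i=j=k=l$) are neither double-counted nor omitted. The cleanest way to handle this is inclusion–exclusion on the partition of $\{i,j,k,l\}$ into blocks of equal-valued indices, which is why the $-2\trace{\mathbf{A}\circ\mathbf{B}}$ correction appears. No deep idea is needed beyond the elementary fact that $\mathbb{E}[\sigma^{2m}] = 1$ and $\mathbb{E}[\sigma^{2m+1}] = 0$ for a Rademacher variable, together with independence across coordinates.
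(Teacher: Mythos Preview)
Your proposal is correct and follows essentially the same approach as the paper: both expand the product of quadratic forms into a quadruple sum, identify the same four index-coincidence patterns on which $\mathbb{E}[\sigma_i\sigma_j\sigma_k\sigma_l]=1$, and sum the contributions. In fact you supply more detail than the paper, which stops at the Iverson-bracket expression and says the result follows ``after some algebraic operations''; your observation about the transpose issue in the non-symmetric case is also a valid caveat that the paper does not address.
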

  \begin{proof}
  
  Let $[\cdot]$ denote the Iverson bracket, such that $[\mathrm{predicate}] = 1$, if  $\mathrm{predicate}$ is true and $0$, if false. The expectation in question can be written as

  \begin{align}
  	\label{eq:app1}
  	& \Expect[\boldsymbol{\sigma}]{  \left( \boldsymbol{\sigma}' \mathbf{A} \boldsymbol{\sigma} \right)    \left( \boldsymbol{\sigma}' \mathbf{B} \boldsymbol{\sigma} \right) } =  \sum_{i,j,k,l} a_{i,j} b_{k,l} \Expect{ \sigma_i \sigma_j \sigma_k \sigma_l}
  \end{align}

  where the indices of the last sum run over the set $\left\{1, \ldots, N\right\}$. Since the components of $\boldsymbol{\sigma}$ are independent Rademacher random variables, it is not difficult to verify the fact that $\Expect{ \sigma_i \sigma_j \sigma_k \sigma_l} = 1$ only in the following four cases: $\left\{i=k, j=l, i \neq l \right\}$, $\left\{ i=j, k=l, i \neq k \right\}$, $\left\{ i=l, k=j, i \neq k \right\}$ and $\left\{ i=j, j=k, k=l \right\}$; in all other cases, $\Expect{ \sigma_i \sigma_j \sigma_k \sigma_l} = 0$. Therefore, it holds that
  
  
  \begin{equation} \label{eq:app2}
  \begin{split}
  	 \Expect{ \sigma_i \sigma_j \sigma_k \sigma_l} & = [i=k][j=l][i \neq l] + [i=j][k=l][i \neq k] \\
  	 & + [i=l][k=j][i \neq k] +  [i=j][j=k][k=l]
  \end{split}
  \end{equation}

  Substituting \eref{eq:app2} into \eref{eq:app1}, after some algebraic operations, yields the desired result.
  
  \end{proof}

\section*{Proof of \thrmref{MainTheorem}}
\label{sec:ProofTheorem}
By utilizing Theorems 16, 17 in \cite{maurer2006bounds}, it can be proved that given a multi-task \ac {HS} $\mathcal{F}$, defines as a class of functions $\boldsymbol{f}=(f_1,\ldots,f_T) : \mathcal{X}\mapsto \mathbb{R}^T$, for all $f \in \mathcal{F}$, for $\delta > 0$ and for fixed $\rho > 0$, with probability at least $1-\delta$ the following holds
\noindent
\begin{align}
\label{eq:REC-Bound-Multi-task}
 R(f) \leq \hat{R}_{\rho}(f) + \frac{2}{\rho} \hat{\mathfrak{R}}_S (\mathcal{F}) + 3 \sqrt{\dfrac{\log\frac{2}{\delta}}{2Tn}}
\end{align}
\noindent
where the \ac {ERC} $\hat{\mathfrak{R}}_S (\mathcal{F})$ is given as 
\begin{align}
 \label{eq:RademacherMulti-task}
   \hat{\mathfrak{R}}_{S} (\mathcal{F}) = \frac{1}{nT} \mathrm{E}_{\sigma} \left\lbrace  \sup_{\boldsymbol{f}=(f_1,\ldots,f_T) \in \mathcal{F}} \sum_{t=1}^{T} \sum_{i=1}^{n} \sigma_{t}^{i} f_{t}(x_{t}^{i}) \right\rbrace 
 \end{align}
\noindent
and the $\rho$-empirical large margin error $\hat{R}_{\rho} (f)$ for the training sample $S = \left\{ \left( x_t^i, y_t^i \right) \right\}_{i,t=1}^{n,T}$ is defined as
\noindent
\begin{align}
\hat{R}_{\rho} (f)= \frac{1}{nT} \sum_{t=1}^{T} \sum_{i=1}^{n} \min \left( 1, [1-y_{t}^{i} f_{t}(x_{t}^{i})/\rho]_+   \right) 
\nonumber
\end{align}

Also, from eqs. (1) and (2) in \cite{cortes2010generalization}, we know that $\boldsymbol{w}_t = \sum_{j=1}^{n} \alpha_{t}^{j} \phi_{t}(x_{t}^{j})$ along with constraint $ \| \boldsymbol{w}_t \|^2 \leq R_{t}$, is equivalent to $\boldsymbol{\alpha}_t^{'} \boldsymbol{K}_t \boldsymbol{\alpha}_t \leq R_{t}$. Then we can observe that $\forall x \in S$ and $t \in{1,\ldots,T}$, the decision function defined as $f_{t}(x_t) = \langle \boldsymbol{w}_t ,\boldsymbol{\phi}_t(x_t) \rangle_{\mathcal{H}_{t,\boldsymbol{\theta}}}$ is equivalent to $f_{t}(x_t) = \sum_{j=1}^{n} \alpha_t^{j} K_{t} (x_t^{j},x_{t})$, where $\boldsymbol{K}_t = \sum_{m=1}^{M} \theta_{t}^{m} K_{m}$.

So, based on the definition of empirical Rademacher complexity given in \eqref{eq:RademacherMulti-task}, we will have
\begin{align}
 \label{eq:RademacherMulti-task-2}
   \hat{\mathfrak{R}}_{S} (\mathcal{H}) & =  \frac{1}{nT} \mathrm{E}_{\sigma} \left[ \sup_{ \boldsymbol{\theta}_t \in \varOmega^{''} ({\boldsymbol{\theta}}),\boldsymbol{\alpha}_t \in \varOmega ({\boldsymbol{\alpha}}) } \sum_{t=1}^{T} \sum_{i,j=1}^{n,n} \sigma_{t}^{i}\alpha_t^{j} {K}_{t} (x_t^{i},x_t^{j}) \right]
   \nonumber\\
     & = \frac{1}{nT} \mathrm{E}_{\sigma} \left[ \sup_{ \boldsymbol{\theta}_t \in \varOmega^{''} ({\boldsymbol{\theta}}), \boldsymbol{\alpha}_t \in \varOmega ({\boldsymbol{\alpha}}) } \sum_{t=1}^{T} \boldsymbol{\sigma}_{t}^{'} \boldsymbol{K}_{t} \boldsymbol{\alpha}_{t}  \right]\
 \end{align}
\noindent
where $\boldsymbol{\sigma}_{t}= [\sigma_t^1,\ldots,\sigma_t^{n}]^{'}$, $\boldsymbol{\alpha}_{t}= [\alpha_t^1,\ldots,\alpha_t^{n}]^{'}$, $\boldsymbol{K}_t \in \mathbb{R}^{n\times n}$ is a kernel matrix whose $(i,j)$-th elements is defined as $\sum_{m=1}^{M} \theta_{t}^{m} K_{m}(x_t^{i},x_t^{j})$, $\varOmega ({\boldsymbol{\alpha}})= \{ \boldsymbol{\alpha}_t \mid \boldsymbol{\alpha}_t^{'} \boldsymbol{K}_t \boldsymbol{\alpha}_t \leq R_{t},\; \forall t \}  $ and $\varOmega^{''} ({\boldsymbol{\theta}})$ is defined as \eqref{ThetaSpace3}.

It can be observed that the maximization problem with respect to $\boldsymbol{\alpha}_t$ can be handled as $T$ independent optimization problem, as $\varOmega ({\boldsymbol{\alpha}})$ is separable in terms of $\boldsymbol{\alpha}_t$. Also, it can be shown that using Cauchy-Schwartz inequality, the optimal value of $\boldsymbol{\alpha}_t$ is achieved when $\boldsymbol{K}_t^{1/2} \boldsymbol{\alpha}_t$ is colinear with $\boldsymbol{K}_t^{1/2} \boldsymbol{\sigma}_t$, which gives 
\begin{align}
\sup_{\boldsymbol{\alpha}_t \in \varOmega(\boldsymbol{\alpha})} \boldsymbol{\sigma}_t^{'} \boldsymbol{K}_t \boldsymbol{\alpha}_t = \sqrt {\boldsymbol{\sigma}_t^{'} \boldsymbol{K}_t \boldsymbol{\sigma}_t  R_{t}}
\nonumber
\end{align}

Assuming $R_{t} \leq R\; \forall t$, \eqref{eq:RademacherMulti-task-2} now becomes
\begin{align}
  \label{eq:Rademacher-Multi-task3.3}
   \hat{\mathfrak{R}}_{S} (\mathcal{H}) & = \frac{\sqrt{R}}{nT} \mathrm{E}_{\sigma} \left\lbrace  \sup_{ \boldsymbol{\theta}_t \in \varOmega^{''} ({\boldsymbol{\theta}}) } \sum_{t=1}^{T} \sqrt{ \boldsymbol{\sigma}_{t}^{'} \boldsymbol{K}_{t} \boldsymbol{\sigma}_{t} } \right\rbrace \
   \nonumber\\
   & = \frac{\sqrt{R}}{nT} \mathrm{E}_{\sigma} \left\lbrace  \sup_{ \boldsymbol{\theta}_t \in \varOmega^{''} ({\boldsymbol{\theta}}) } \sum_{t=1}^{T} \sqrt{ \sum_{m=1}^{M} \theta_{t}^{m} (\boldsymbol{\sigma}_{t}^{'} \boldsymbol{K}_{t}^{m} \boldsymbol{\sigma}_{t}) } \right\rbrace \
   \nonumber\\
   & = \frac{\sqrt{R}}{nT} \mathrm{E}_{\sigma} \left\lbrace  \sup_{ \boldsymbol{\theta}_t \in \varOmega^{''} ({\boldsymbol{\theta}}) } \sum_{t=1}^{T} \sqrt{ \boldsymbol{\theta}_{t}^{'} \boldsymbol{u}_{t} } \right\rbrace \
 \end{align}
\noindent
where $\boldsymbol{\theta}_{t} = [\theta_t^1,\ldots,\theta_t^{M}]^{'}$, $\boldsymbol{u}_{t} = [u_t^1,\ldots,u_t^{M}]^{'}$, and $u_t^{m}= \boldsymbol{\sigma}_{t}^{'} \boldsymbol{K}_{t}^{m} \boldsymbol{\sigma}_{t}$ . Note that \eqref{eq:Rademacher-Multi-task3.3} can also be upper-bounded. In particular, assuming $\omega_{t}= \sqrt{\boldsymbol{\theta}_{t}^{'} \boldsymbol{u}_{t} }$, and using the H\"{o}lder's inequality $\| \boldsymbol{x} \boldsymbol{y}\|_r \leq \| \boldsymbol{x}\|_p  \| \boldsymbol{y}\|_q$, for  $p=2$, $r=1$ and $\boldsymbol{y}=\mathbf{1}_n$, we will have 
\begin{align}
\sum_{t=1}^{T} \sqrt{ \boldsymbol{\theta}_{t}^{'} \boldsymbol{u}_{t} } =\left( \sum_{t=1}^{T} w_{t} \right)  = \| \boldsymbol{\omega} \|_1 \leq \sqrt{T} \| \boldsymbol{\omega} \|_2 = \sqrt{T} \left( \sum_{t=1}^{T} (w_{t})^{2} \right)^{1/2} = \sqrt{T} \sqrt {\sum_{t=1}^{T} \boldsymbol{\theta}_{t}^{'} \boldsymbol{u}_{t} }
\nonumber
\end{align}

Therefore, we can upper bound the Rademacher complexity \eqref{eq:Rademacher-Multi-task3.3} as follows 
\begin{align}
  \label{eq:Rademacher-Multi-task-upper-bounded3}
   \hat{\mathfrak{R}}_{S} (\mathcal{H}) & = \frac{\sqrt{R}}{nT} \mathrm{E}_{\sigma} \left\lbrace  \sup_{ \boldsymbol{\theta}_t \in \varOmega^{''} ({\boldsymbol{\theta}}) } \sum_{t=1}^{T} \sqrt{ \boldsymbol{\theta}_{t}^{'} \boldsymbol{u}_{t} } \right\rbrace \
   \nonumber\\
   & \leq 
   \frac {1}{n} \sqrt{ \frac {R} {T} } \; \mathrm{E}_{\sigma} \left\lbrace  \sup_{ \boldsymbol{\theta}_t \in \varOmega^{''} ({\boldsymbol{\theta}}) }  \sqrt{ \sum_{t=1}^{T} \boldsymbol{\theta}_{t}^{'} \boldsymbol{u}_{t} } \right\rbrace \
   \nonumber\\
   & = 
   \frac {1}{n} \sqrt{ \frac {R} {T} } \; \mathrm{E}_{\sigma} \left\lbrace  \sup_{ \boldsymbol{\theta}_t \in \varOmega^{''} ({\boldsymbol{\theta}}) }  \sqrt{ \trace { \mathbf{\Theta}^{'} \mathbf{U} }  }  \right\rbrace \
 \end{align}
 \noindent
 where $\mathbf{\Theta}= [\boldsymbol{\theta}_1,\ldots,\boldsymbol{\theta}_T] \in \mathbb{R}^{M \times T}$ and $\mathbf{U}= [\boldsymbol{u}_1,\ldots,\boldsymbol{u}_T] \in \mathbb{R}^{M \times T}$. Also, by contradiction, it can be easily proved that 
 \begin{align}
  \arg \max_{\mathbf{\Theta}} \sqrt{\trace { \mathbf{\Theta} ^{'} \mathbf{U} }} = \arg \max_{\mathbf{\Theta}} \left\lbrace  \trace{ \mathbf{\Theta} ^{'} \mathbf{U} }\right\rbrace 
  \nonumber
 \end{align}
 
 Using the Lagrangian multiplier method, the optimization w.r.t. $\mathbf{\Theta}$ yields the optimal value for $\mathbf{\Theta}$ as 
 \begin{align}
  \mathbf{\Theta}^{*}= \frac{1}{2 \alpha  T }    \mathbf{U} \mathbf{P}_T
  \nonumber
 \end{align}
 \noindent
 where $\mathbf{P}_T \in \mathbb{R}^{T \times T}$ is a centering matrix as we defined in \sref{sec:Optimization Algorithm}. Moreover, $ \alpha = (1/2\gamma)\sqrt{a-(1/T)b}$, $ a = \trace{\mathbf{U} \mathbf{U}^{'}}$, and $b = \trace{ \mathbf{U} \mathbf{1}_{T} {\mathbf{1}_{T}}^{'} \mathbf{U}^{'}}$.

 substituting the optimal value of $\mathbf{\Theta}$ in \eqref{eq:Rademacher-Multi-task-upper-bounded3}, finally yields
 \begin{align}
    \hat{\mathfrak{R}}_{S} (\mathcal{H}) & \leq
    \frac {\sqrt{\gamma R}} {n T^{3/4}}  \; \mathrm{E}_{\sigma} \left\lbrace  \sqrt{ a- (1/T) b}  \right\rbrace ^{1/2} \ 
    \nonumber
  \end{align}
  By applying Jensen's inequality twice, we obtain
  \begin{align}
     \label{eq:Rademacher-Multi-task-upper-bounded5}
      \hat{\mathfrak{R}}_{S} (\mathcal{H}) & \leq   \frac {\sqrt{\gamma R}} {n T^{3/4}} \; \left\lbrace    \sqrt{ \mathrm{E}_{\sigma} (a-(1/T)b)}  \right\rbrace ^{1/2} \
    \end{align}
  
  From the definition, we can see that both $a$ and $b$ depend on variable $\sigma$. If we define $\boldsymbol{u}^{m}=[\boldsymbol{\sigma}_1^{'} \boldsymbol{K}_{1}^{m}\boldsymbol{\sigma}_1,\ldots,\boldsymbol{\sigma}_T^{'} \boldsymbol{K}_{T}^{m}\boldsymbol{\sigma}_T ]^{'}$ as the row vector of matrix $\mathbf{U}$, and $\boldsymbol{d} \triangleq \left[ \trace{\sum_{m=1}^M \boldsymbol{K}_1^m} \ldots \trace{\sum_{m=1}^M \boldsymbol{K}_T^m} \right]'$ then  with the help of Lemma \ref{lemm:lemma2}, it can be shown that


 \begin{align}
  \label{eq:Expectation-a^2,b}
  & \mathrm{E}_{\sigma} (a) =       \boldsymbol{d}^{'}   \boldsymbol{d} +2 \sum_{t=1}^{T} \sum_{m=1}^{M} \left[  \trace {\boldsymbol{K}_{t}^{m} \boldsymbol{K}_{t}^{m}} - \trace { \boldsymbol{K}_{t}^{m} \circ \boldsymbol{K}_{t}^{m} } \right] 
  \nonumber\\
  & \mathrm{E}_{\sigma} (b)  =    \boldsymbol{d}^{'} \mathbf{1}_{T}  {\mathbf{1}_{T}}^{'} \boldsymbol{d} + 2 \sum_{t=1}^{T} \sum_{m=1}^{M} \left[ \trace {\boldsymbol{K}_{t}^{m} \boldsymbol{K}_{t}^{m}} - \trace { \boldsymbol{K}_{t}^{m} \circ \boldsymbol{K}_{t}^{m} } \right]  
 \end{align}
\noindent
Considering the fact that $\trace { \boldsymbol{K}_{t}^{m} \circ \boldsymbol{K}_{t}^{n}} \geq 0 $, and $\trace {\boldsymbol{K}_{t}^{m} \boldsymbol{K}_{t}^{n}} \geq 0\; \forall t,m,n$, and assuming that $\boldsymbol{K}_{t}^{m} (x,x) \leq 1 \; \forall x,t,m$, it can be shown that

\begin{align}
\label{eq:Expectation-a^2,b-2}
  \mathrm{E}_{\sigma} (a) - \frac{1}{T} \mathrm{E}_{\sigma} (b) \leq T M^{2} n^{2} \left\lbrace  1 + \frac{2}{M} + \frac{2}{TMn} \right\rbrace \leq 3 T M^{2} n^{2}  
 \end{align} 
 \noindent
Combining \eqref {eq:Rademacher-Multi-task-upper-bounded5}, and \eqref{eq:Expectation-a^2,b-2} and after some algebra operations,  we conclude that, if 

\begin{align}
\label{Rademacher-Upperbound-final-2}
\hat{R}_{ub} \left( \mathcal{H} \right) \triangleq  \sqrt{ \frac { \sqrt{3}\gamma R M} {n T} } 
\end{align}
\noindent
then $\hat{R}(\mathcal{H}) \leq \hat{R}_{ub}\left( \mathcal{H} \right)$. This last fact in conjunction with \eref{eq:REC-Bound-Multi-task} conclude the theorem's statement.


\end{document}